\definecolor{mygreen}{HTML}{3cb44b}
\definecolor{skyblue}{HTML}{beffff}
\definecolor{lightgreen}{HTML}{90ee90}
\definecolor{emerald}{rgb}{0.31, 0.78, 0.37}
\definecolor{mygreen}{HTML}{3cb44b}
\colorlet{myyellow}{green!10!orange!90!}
\newcommand{\RN}[1]{%
	\textup{\lowercase\expandafter{\it \romannumeral#1}}%
}
\newcommand{\ie}[0]{\emph{i.e., }}
\newcommand{\eg}[0]{\emph{e.g., }}
\newcommand{\beq}{\vspace{0mm}\begin{equation}}
\newcommand{\eeq}{\vspace{0mm}\end{equation}}
\newcommand{\beqs}{\vspace{0mm}\begin{eqnarray}}
\newcommand{\eeqs}{\vspace{0mm}\end{eqnarray}}
\newcommand{\barr}{\begin{array}}
\newcommand{\earr}{\end{array}}
\newcommand{\deltav}[0]{{\boldsymbol{\delta}}\xspace}
\newtheorem{theorem}{Theorem} 
\definecolor{Gray}{gray}{0.93}
\definecolor{mygreen}{HTML}{3cb44b}
\definecolor{lightblue}{rgb}{0.85, 0.95, 1}
\definecolor{lightorange}{rgb}{1, 0.95, 0.85}
\definecolor{lightpink}{rgb}{1, 0.9, 0.95}
\newcommand{\shortname}{Sugar}
\newcommand{\longname}{{\bf S}tructure-induced approach to {\bf u}nify {\bf g}enerative {\bf a}nd disc{\bf r}iminative paradigms}
\title{Unified Generative and Discriminative Training for Multi-modal Large Language Models}
\author{
Wei Chow$^{1}$\quad
Juncheng Li$^{1,\dagger}$\quad
Qifan Yu$^{1}$\quad
Kaihang Pan$^{1}$\quad
Hao Fei$^{2}$\\
\textbf{Zhiqi Ge$^{1}$\quad~Shuai Yang$^{1}$\quad~Siliang Tang$^{1,\dagger}$\quad~Hanwang Zhang$^{3}$\quad~Qianru Sun$^{4}$}\\
  $^1$Zhejiang University\quad~$^2$National University of Singapore\\
  $^3$Nanyang Technological University\quad~$^4$Singapore Management University\\
  \texttt{\{xieqiao, junchengli, yuqifan, kaihangpan\}@zju.edu.cn} \\
  \texttt{\{zhiqige, syang, siliang\}@zju.edu.cn} \\
  \texttt{haofei37@nus.edu.sg},\   
  \texttt{hanwangzhang@ntu.edu.sg},\   
  \texttt{qianrusun@smu.edu.sg}\\ 
}
\begin{document}

\maketitle
\renewcommand{\thefootnote}{\fnsymbol{footnote}}
\footnotetext[2]{~Corresponding Author.}
\renewcommand{\thefootnote}{\arabic{footnote}}
\vspace{-8mm}
\begin{abstract}
\vspace{-2mm}
In recent times, Vision-Language Models (VLMs) have been trained under two predominant paradigms. Generative training has enabled Multimodal Large Language Models (MLLMs) to tackle various complex tasks, yet issues such as hallucinations and weak object discrimination persist. 
Discriminative training, exemplified by models like CLIP, excels in zero-shot image-text classification and retrieval, yet struggles with complex scenarios requiring fine-grained semantic differentiation. 
This paper addresses these challenges by proposing a unified approach that integrates the strengths of both paradigms.
Considering interleaved image-text sequences as the general format of input samples, we introduce a structure-induced training strategy that imposes semantic relationships between input samples and the MLLM's hidden state. This approach enhances the MLLM's ability to capture global semantics and distinguish fine-grained semantics.
By leveraging dynamic sequence alignment within the Dynamic Time Warping framework and integrating a novel kernel for fine-grained semantic differentiation, our method effectively balances generative and discriminative tasks.
Extensive experiments demonstrate the effectiveness of our approach, achieving state-of-the-art results in multiple generative tasks, especially those requiring cognitive and discrimination abilities. Additionally, our method surpasses discriminative benchmarks in interleaved and fine-grained retrieval tasks. By employing a retrieval-augmented generation strategy, our approach further enhances performance in some generative tasks within one model, offering a promising direction for future research in vision-language modeling.
The project repository is \href{https://sugar-mllm.github.io/}{here}.
\end{abstract}
\vspace{-2mm}
\section{Introduction}

In recent times, Vision-Language Models (VLMs) have been trained under two predominant paradigms: generative training and discriminative training. \textbf{Generative Training} has achieved remarkable success in enabling Multimodal Large Language Models (MLLMs)~\cite{achiam2023gpt, liu2023improved, wu24next} to develop a wide range of powerful capabilities that can handle various complex tasks (\eg open-world visual question-answering, image caption generation, etc.) within a single model.
However, challenges such as hallucinations and weak image object discrimination abilities~\cite{bai2024ha, yang2023ig} persist. 
\textbf{Discriminative Training}, exemplified by CLIP~\cite{radford2021learning}, exhibits remarkable representation capabilities for zero-shot image-text classification and retrieval. Nonetheless, it encounters difficulties in processing complex scenarios (\ie, retrieving multi-modal documents with interleaved images and texts)~\cite{lin2023finegrained, Lin_Mei_Chen_Byrne_2024} and exhibits a limited ability to discern detailed semantic differences~\cite{thrush_and_ross2022winoground, wu2024towards}.

The disparity between these two paradigms has sparked recent studies aimed at imparting discriminative ability to generative pre-trained MLLMs. However, certain aspects of performance still pose limitations (\eg singular discriminative tasks~\cite{yang2023ig}, weak discriminative task performance~\cite{koh2023grounding}, weak generalization~\cite{liu2024rar}, etc.), while others entail compromising the model's original generative capabilities~\cite{barbany2024leveraging}.
 
Overall, the reason generative paradigms struggle with performing discriminative tasks like retrieval is due to overlooking two crucial abilities:
 
\textit{(i)} \textbf{Comprehensively capturing the global semantics}. Recent studies have revealed that causal LLMs tend to exhibit a bias towards capturing global information from the input samples, often resulting in a tendency to overlook information located in the middle, especially for long sequences~\cite{coelho2024dwell,liu2024lost}. As illustrated in Figure~\ref{fig:section01}(a), we chose 500 samples from WebQA~\cite{WebQA21}, where the task is to find and reason about the right image-text pair among five distractors to produce a yes or no answer.
We conducted experiments using VILA~\cite{lin2023vila}, a MLLM with state-of-the-art interleaved image-text comprehension ability, alongside our model. When placing the relevant pair in different positions, the performance of MLLMs followed a 'U' shape, indicating a bias in capturing global semantic information. Consequently, MLLMs encounter difficulties in forming comprehensive representations that encompass global semantics for retrieval tasks.

\textit{(ii)} \textbf{Keenly differentiating the detailed semantics}.
Some research~\cite{li2023fine, wang2023makes} has found that the existing generative training framework cannot fully distinguish input semantics in certain contexts, causing MLLMs to struggle with tasks requiring fine-grained semantics~\cite{li2022fine, yu2023visually}.
As depicted in Figure~\ref{fig:section01}(b), we noticed that MLLMs face challenges in choosing the right description for two similar images in the MMVP-VLM benchmark~\cite{tong2024eyes}.
This indicates that MLLMs struggle to effectively differentiate the detailed semantics of input samples, naturally leading to difficulties in forming effective queries for retrieval.
\begin{figure}[h!]
	\centering  
	\vspace{-4mm}
	\includegraphics[width=0.99\textwidth]{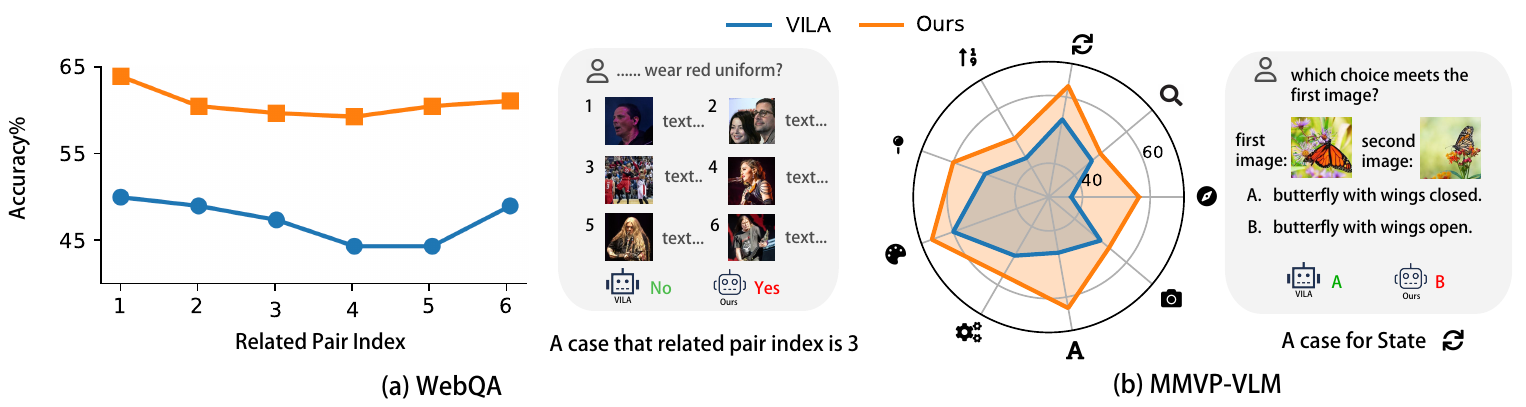}
	\vspace{-1mm}
	\caption{
		(a) In WebQA~\cite{WebQA21}, the accuracy roughly forms a “U” shape curve when the relevant image-text pair for a question appears at different positions. While our model also shows similar trends, it tends to be more stable overall.
		(b) The accuracy of various types of questions in MMVP-VLM~\cite{tong2024eyes}, it can be observed that our model's performance improves on such tasks after introducing the discriminative training. Details can be seen in Appendix~\ref{app_intro}}
	\label{fig:section01}  
	\vspace{-5mm}
\end{figure}

In this paper, we argue that the current separated paradigms possess the potential for achieving synergistic gains. We propose \textbf{\shortname}: \longname{} (shown in Figure~\ref{fig:train}), leveraging discriminative training to acquire the two abilities above while harnessing the potential of generative training in complex discriminative tasks like image-text interleaved retrieval and fine-grained retrieval.
Specifically, we explicitly impose the semantic relationships between different input samples as an induced structural constraint on the hidden state of MLLMs. We consider the interleaved image-text sequence as the general format of input samples, and then formulate the relationship between any two samples as a dynamic sequence alignment problem within the Dynamic Time Warping framework~\cite{muller2007dynamic, huang2018global}. In this way, we can explicitly modulate the hidden states of the MLLM by leveraging the semantic relationships between interleaved input sequences, thereby encouraging the MLLM to fully \textbf{capture the global semantics} of the inputs.

To further enhance the ability to \textbf{differentiate fine-grained semantics}, we integrate a novel kernel into the Dynamic Time Warping framework. Leveraging the strengths of various discriminative pre-trained models, it performs dynamic sequence alignment for diverse embeddings tailored to specific contexts, thus addressing the inherent limitations in fully utilizing input semantics.
Through this explicit structure-induced constraint, our framework enables MLLMs to capture the global semantics and fine-grained details of the input multimodal sequence more effectively, thus bridging the gap between generative and discriminative training paradigms.
\begin{figure}[h!]
	\centering  
	\vspace{-9mm}
	\includegraphics[width=0.99\textwidth]{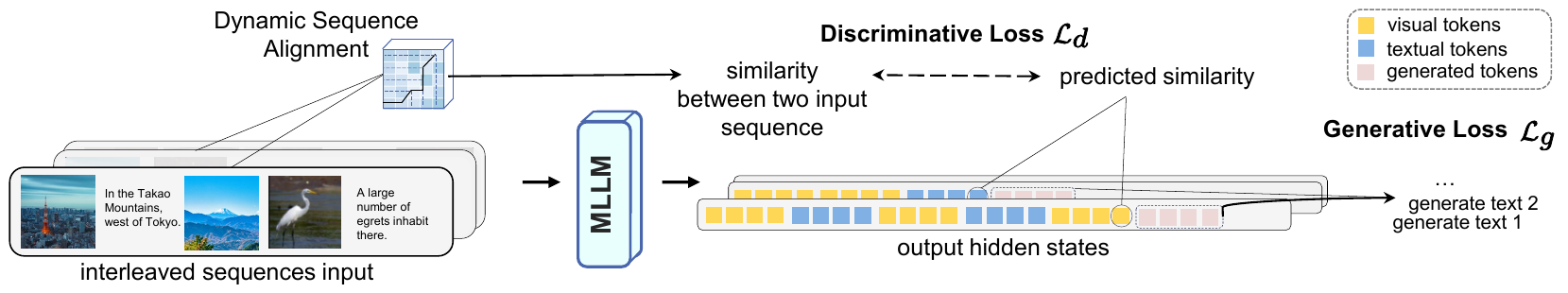}
	\vspace{-1mm}
	\caption{Our structure-induced generative and discriminative training joint training strategy.}
	\label{fig:train}  
	\vspace{-5mm}
\end{figure}

Our method effectively balances both discriminative and generative tasks, demonstrating synergistic benefits. 
\textit{(i)}
Large-scale generative pre-trained models possess semantic-rich hidden states \cite{lai2023lisa, yang2023improved, fei2024enhancing}, which facilitate \textbf{discriminative tasks} like retrieval. Moreover, harnessing the capabilities of MLLM is crucial for complex discriminative tasks, such as interleaved image-text retrieval and fine-grained retrieval. 
\textit{(ii)} By integrating discriminative tasks, the model's effectiveness in \textbf{generative tasks}, particularly within tasks requiring cognitive and discrimination abilities, is enhanced, thereby mitigating certain occurrences of hallucinations.
\textit{(iii)} We can employ \shortname{} to realize \textbf{retrieval-augmented generation}~\cite{aghajanyan2022cm3}, eliminating the need for an off-the-shelf retrieval module~\cite{shi2023replug}, thereby amplifying the performance of various generative tasks.
The usage of off-the-shelf retrieval presents a challenge wherein the retriever's performance affects the generator's final output~\cite{luo2024does}. This necessitates independent optimization of both components, posing a dilemma in selecting optimal configurations. However, our approach circumvents such optimization challenges.

Through extensive experimentation, we have demonstrated the effectiveness of our approach. For generative tasks, \shortname{} establishes new state-of-the-art results on the tasks for
complicated multimodal comprehension tasks (\ie DEMON~\cite{li2023fine}), fine-grained semantic distinctions (\ie VizWiz~\cite{gurari2018vizwiz}, MME~\cite{yin2023survey}), object hallucinations detection (\ie POPE~\cite{li2023evaluating}) (Section~\ref{exp_llava} and Section~\ref{exp_demon}). 
For discriminative tasks, we achieved competitive results in image-text retrieval compared, and significantly surpassed CLIP in interleaved retrieval and fine-grained retrieval (Section~\ref{exp_finegrained}). Furthermore, employing the retrieval-augmented generation (RAG) strategy led to further improvements in a series of generative tasks (Section~\ref{exp_rag}).
\vspace{-3mm}
\section{Related Work}
\vspace{-3mm}
\textbf{Multi-modal Large Language Models}.
Flamingo~\cite{alayrac2022flamingo} and BLIP-2~\cite{li2023blip} integrate LLMs with visual encoders, showcasing impressive zero-shot capabilities by aligning visual features with language representations. Building upon the advancements of LLaVA-1.5~\cite{liu2023improved}, subsequent studies~\cite{zhu2023minigpt,dai2024instructblip,ye2023mplug,bai2023qwen, laurenccon2023introducing, pan2024auto, yin2023survey, yu2023visually, li2023gradient} propose fine-tuning MLLMs with multimodal instruction tuning data~\cite{zhang2024hyperllava}. Recently, there has been a surge in research~\cite{lin2023vila, tian2024mminterleaved,fei2024vitron,fei2024video, li2023variational} dedicated to enhancing the capacity of MLLMs to process interleaved image-text inputs effectively.
However, these models primarily focus on generative tasks, overlooking the importance of introducing discriminative constraints. In this paper, we propose a structure-induced joint training strategy for unifying generative and discriminative tasks, further enhancing the capabilities of MLLMs, especially those requiring cognitive and discriminative abilities.

\textbf{Vision-Language Pre-training}. 
Vision-Language Pre-training primarily come in two forms: single-stream and dual-stream. In single-stream models, the embeddings for the image and text modalities are concatenated and jointly encoded~\cite{kim2021vilt,li2021align}, while in dual-stream models, they are encoded by separate modality-specific encoders with optional cross-modality fusion~\cite{radford2021learning,hu2021unit,bai2024meissonic}. These models have shown effectiveness in tasks such as classification and retrieval. However, they face challenges including difficulty in processing complex composed sequences~\cite{lin2023finegrained, Lin_Mei_Chen_Byrne_2024} and limited ability to discern detailed semantic differences~\cite{tong2024eyes, thrush_and_ross2022winoground}. Recent attempts to utilize generative MLLMs for discriminative tasks have faced limitations, such as singular discriminative tasks~\cite{yang2023ig}, weak discriminative task performance~\cite{koh2023grounding}, poor generalization~\cite{liu2024rar}, and compromised generative capabilities~\cite{barbany2024leveraging}.

\textbf{LLMs for Retrieval}.
Early models for retrieval primarily focused on word representations~\cite{conneau2017supervised, mikolov2013distributed, reimers2019sentence}, with minimal generative capabilities. Some recent works have endeavored to fine-tune generative pre-trained LLMs to generate discriminative embeddings, albeit at the expense of compromising the model's original generative capabilities~\cite{li2023making, pan2023controlretriever, muennighoff2022sgpt, ma2023fine, gao2023fine, pan2024i3}.
GRIT~\cite{muennighoff2024generative} integrates generative and discriminative tasks in NLP and demonstrates mutual benefits between them. However, its training cost is prohibitively high compared to individual tasks. Moreover, due to its specialized attention mechanism, the model can only be trained from scratch.

\textbf{Retrieval-Augmented Generation}. 
Retrieval-Augmented Generation (RAG)\cite{gao2023retrieval,pan2024towards}, which harnesses the advanced inference capabilities of LLMs along with external knowledge, has the potential to significantly mitigate issues related to long-tail entities and reduce the occurrence of hallucinatory responses~\cite{guu2020retrieval, ji2023binary,zhang2022boss, srinivasan2022quill, yang2023inference, yang2023re, yu2024hallucidoctor}. Recently, there have also been related studies in the multimodal domain attempting to utilize retrieval augmentation~\cite{yasunaga2022retrieval, yu2023scaling}. 
These methods typically require an additional retrieval module (\eg CLIP), leading to component optimization challenges where the overall model performance is affected by the performance of the retrieval model, as well as concerns regarding the compatibility between the retrieval model and the MLLMs. Furthermore, retrieval modules like CLIP struggle to handle compositional or fine-grained scenarios, posing certain challenges for retrieval.

\vspace{-3mm}
\section{Method}
\vspace{-3mm}
As illustrated in Figure~\ref{fig_arch}, we initially introduce the problem formulation and offer an overview of our structure-induced joint training strategy in Section~\ref{method model}. 
Subsequently, we delve into the specifics of dynamic sequence alignment algorithm in Section~\ref{seq sim}. Finally, we further introduce the Triple Kernel to aid in discriminating detailed semantics in Section~\ref{loss_g}.
\vspace{-0.5mm}
\subsection{Problem Formulation and Architecture Overview}\label{method model}
\begin{figure}[h!]
	\centering  
	\vspace{-4.5mm}
	\includegraphics[height=3.75cm]{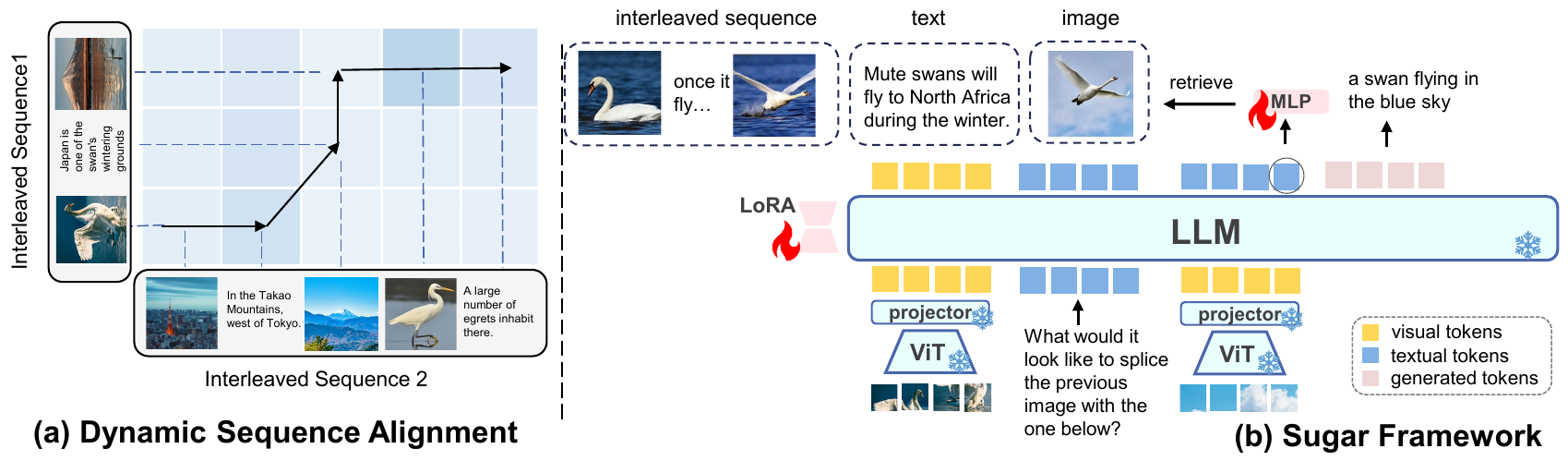}
	\vspace{-1.5mm}
    	\caption{(a) \textbf{Dynamic Sequence Alignment}. Semantically matched slices are connected with a blue dashed line. The arrows indicate the direction of the ordered temporal alignment path. With these alignments, we can obtain the similarity between two interleaved inputs for training. (b) \textbf{Sugar Framework}. Sugar supports both multi-modal generation and retrieval simultaneously.}
	\label{fig_arch}  
	\vspace{-3mm}
\end{figure}

We view the interleaved image-text sequence as the general format for input samples, where images and textual data are alternately arranged.
Typically, Multimodal Large Language Models (MLLMs)~\cite{liu2023improved, lin2023vila, chen2023minigptv2, bai2023qwen} are tailored to generate text based on such input sequences, and it is conventionally optimized using self-regressive loss $\mathcal{L}_g$. A special scenario arises when the input comprises only one image and a question, prompting the MLLMs to generate an answer accordingly.

While intuitive, this optimization objective solely supervises text generation and lacks constraints on the hidden states of the entire interleaved sequence input. Additionally, the existing generative training framework struggles to fully distinguish input semantics in certain contexts, such as discerning fine-grained object details.
Consequently, it fails to adequately capture the global information or distinguish detailed semantics of the input samples.

Hence, we introduce a structure-induced constraint $\mathcal{L}_d$ (see in Figure~\ref{fig:train}), which explicitly imposes the semantic relationships between different input samples as an induced structural constraint on the hidden states of MLLMs, facilitating the model in \textbf{capturing global semantics}.
We conceptualize the derivation of semantic relationships between input samples as a Dynamic Sequence Alignment problem~\cite{muller2007dynamic}. Additionally, we straightforwardly select a token in the hidden state of the MLLM to encompass all preceding input information, eliminating the need for training any specialized tokens.

To further effectively \textbf{distinguish detailed semantics}, we integrate a novel kernel into the Dynamic Time Warping framework. Leveraging the strengths of various discriminative pre-trained models. 
Combined with this newly proposed loss with a hyperparameter $\alpha$, the training objective can be formulated as:
\begin{equation}
	\mathcal{L} = \mathcal{L}_g + \alpha \mathcal{L}_d
\end{equation}
\subsection{Dynamic Sequence Alignment}\label{seq sim}
We formulate the computation of relationships within input interleaved sequences as a dynamic sequence alignment problem, and solve it by global alignment kernel. For two interleaved image-text sequence, each consisting of $n$ and $m$ images/sentences in total respectively (which we'll refer to as slices later on). We encode and normalize each slice, resulting in two sequences $\textbf{x} = (x_1,\dots, x_n)$ and  $\textbf{y} = (y_1,\dots,y_m)$ all of which take values in a state space $\mathcal{X}$, that is two elements of $\mathcal{X}^\star \stackrel{\text{def}}{=} \bigcup_{i=1}^{\infty} \mathcal{X}^i$. In our setting, $\mathcal{X}$ is simply $\mathbb{R}^d$, $d$ refers to the feature dimension. We define the global alignment kernel as follows, and it has been proved to be positive-definite under mild conditions and may prove more robust to quantify the similarity of two sequences~\cite{radford2021learning, oquab2023dinov2}:
\begin{equation}\label{eq_gak}
	K(\mathbf{x}, \mathbf{y}) =\sum_{\pi \in \mathcal{A}(\mathbf{x}, \mathbf{y})} \prod_{i=1}^{|\pi|} e^{-\phi_\sigma}  \in (0, 1]
\end{equation}
Following the suggestion by~\cite{cuturi2007kernel}, we let ${\varphi _\sigma } = \frac{1}{{2{\sigma ^2}}}\varphi\left(x_{\pi_{1}(i)}, y_{\pi_{2}(i)}\right)  + \log (2 - {e^{ - \frac{\varphi\left(x_{\pi_{1}(i)}, y_{\pi_{2}(i)}\right)}{{2{\sigma ^2}}}}})$, $\sigma$ is standard deviation, and it can be calculated by $\sigma  = \delta \sqrt {\frac{{M + N}}{2}} $ for $x_i,y_i$ in $\textbf{x},\textbf{y}$. $\delta$ is a fixed pre-defined hyperparameter and $\varphi\left(x_{\pi_{1}(i)}, y_{\pi_{2}(i)}\right)$ is the distance between slice $x_{\pi_{1}(i)}$ and $y_{\pi_{2}(i)}$ for an alignment (details for the definition of alignment can be seen in Appendix~\ref{app_sa}).
 
Due to the causal attention mechanism, the token in hidden state of MLLM can encapsulate information from preceding tokens in the sequence. Therefore, we directly utilize the last token $d_i$ of a sequence from the MLLM's hidden state and map it to the $r_i$ using an MLP to represent the entire in-context sequence. During training, we obtain a set of $(r_1, r_2, \ldots, r_n)$ and their corresponding input sequence embedding set $(\textbf{x}_1, \textbf{x}_2, \ldots, \textbf{x}_n)$. It is noteworthy that $r_i$ and $r_j$ ($\textbf{x}_i$ and $\textbf{x}_j$) may originate from the same sequence but occupy different positions, thus enabling our method to utilize samples more efficiently.

Leveraging the GAK, we can derive the similarity matrix of $(r_1, r_2, \ldots, r_n)$ and $(\textbf{x}_1, \textbf{x}_2, \ldots, \textbf{x}_n)$ distinctively, denoted as $\mathcal{M}^{r}, \mathcal{M}^{l}  \in \mathbb{R}^{n \times n}$.
For imposing the semantic relationships between different input samples as an induced structural constraint on the hidden state of MLLMs,
we employ Mean Squared Error (MSE) loss aligned $\mathcal{M}^{r}$ with the label matrix $\mathcal{M}^{l}$. This approach eliminates the need for pre-defined label (\ie positive and negative candidates) during training, allowing seamless integration into the aforementioned training framework (for specific training templates, please refer to Appendix~\ref{app_data}).
Thus, we have the discriminative loss $\mathcal{L}_d$:
\begin{equation}
	\mathcal{L}_d = 
	\frac{1}{n}\sum\limits_{i = 1}^n {\sum\limits_{j = 1}^n {{{(m_{ij}^r - m_{ij}^{l})}^2}} } 
\end{equation}
Additionally, when both $\textbf{x}$ and $\textbf{y}$ contain only one slice, the computed result of the formula is monotonically increasing with the directly calculated cosine similarity (proof can be seen in  Appendix~\ref{monotonically_increasing}). Therefore, in such cases, we simplify the computation by directly using cosine similarity. If $r_i$ and $r_j$ comes from the same input interleaved sample, we manually set $m_{ij}^{l}=1$.

\subsection{Detailed Semantics Modeling}\label{loss_g}
To further effectively distinguish detailed semantics, we further propose the Triple Kernel (TK), a positive definite kernel compatible with the previous framework. The TK leverages representations from diverse pre-trained discriminative models across uni-modal and cross-modal settings, harnessing their respective strengths. The definition is as below:

For two slice $a,b \in \mathbb{R}^d$, meets (i) $|a|=|b|=2$ , $d=d_1+d_2$, $a=\text{concat}(a_1,a_2), b=\text{concat}(b_1,b_2)$, $a_1,b_1 \in \mathbb{R}^{d_1}, a_2,b_2 \in \mathbb{R}^{d_2}$ and $|a_1|=|a_2|=|b_1|=|b_2|=1$; or (ii) $|a|=|b|=1$. We define tripe kernel as follows:
\begin{equation}\label{tripe}
	\varphi (a,b) = \left\{ {\begin{array}{*{20}{c}}
			{||{a_1}-{b_1}|{|^2}}&{|a|=|b|=2 \text{ and } a,b \text{ in uni-modal}}\\
			{||{a_2}-{b_2}|{|^2}}&{|a|=|b|=2 \text{ and } a,b \text{ in cross-modal}}\\
			{||{a}-{b}|{|^2}}&{\text{else}}
	\end{array}} \right.
\end{equation}
We prove triple kernel $\varphi$ is a conditionally positive-definite kernel defined on $\mathcal{X} \times \mathcal{X} \to \mathbb{R}$ (Appendix~\ref{gdfortriple}), aligning with the kernel definition in~\cite{cuturi2007kernel}, thereby possessing its properties.

In practice, we let the feature dimension $d=d_1+d_2$. For images, we employ DINOv2-base~\cite{oquab2023dinov2} and CLIP ViT-L/14~\cite{radford2021learning} for encoding, then concatenate the embeddings after normalization. For sentences, we utilize BGE-base~\cite{bge_embedding} and CLIP ViT-L/14, keeping the dimension unchanged. By utilizing the Triple Kernel, we can fully leverage the strengths of these three models, effectively distinguishing detailed semantics.
\newpage
\begin{figure}[!h]
	\centering  
	\vspace{-4mm}
	\includegraphics[width=0.99\textwidth]{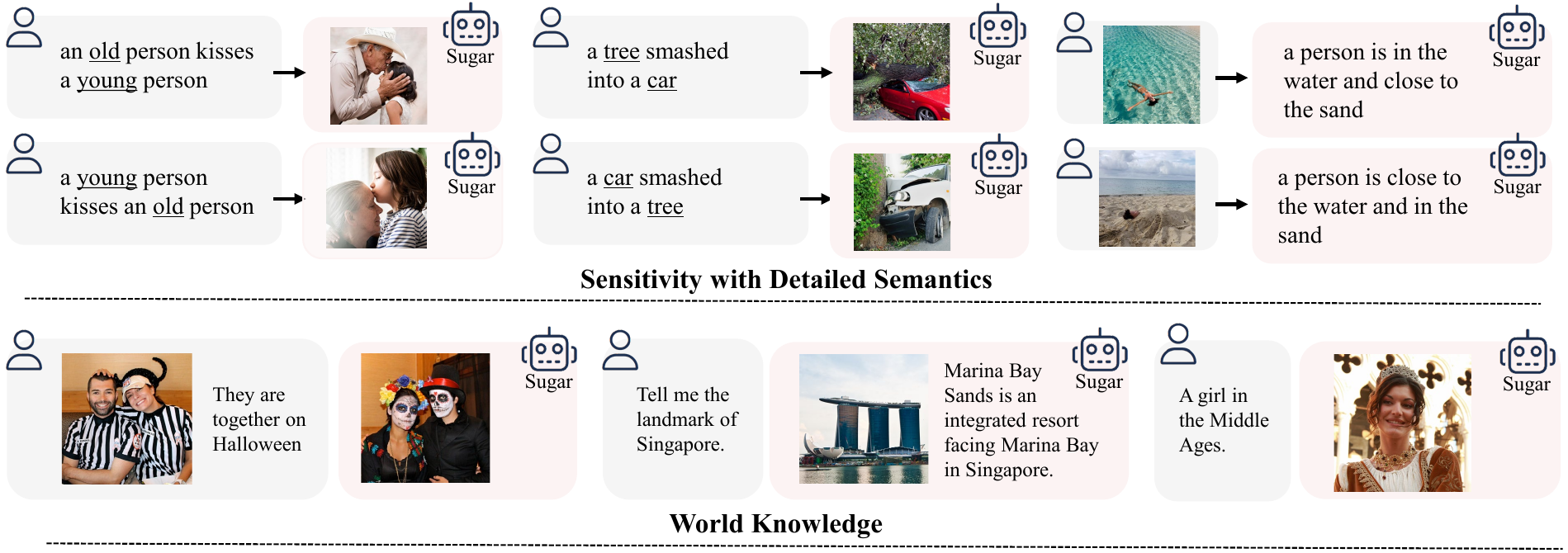}
        \includegraphics[width=0.99\textwidth]{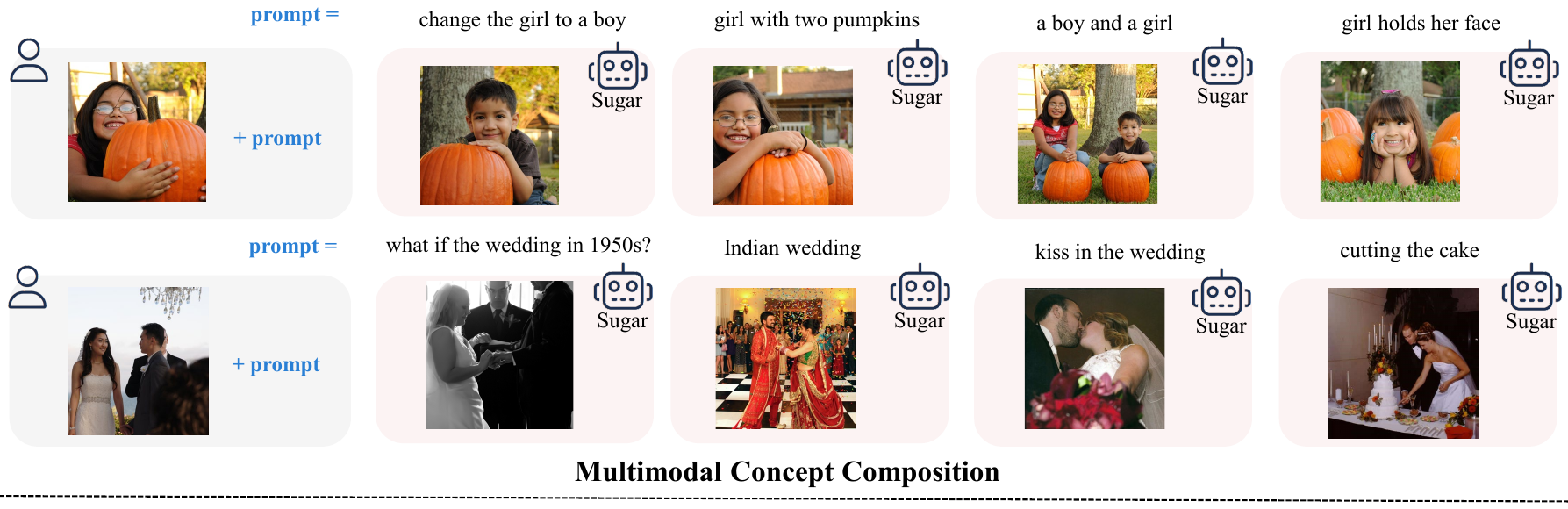}
        \includegraphics[width=0.99\textwidth]{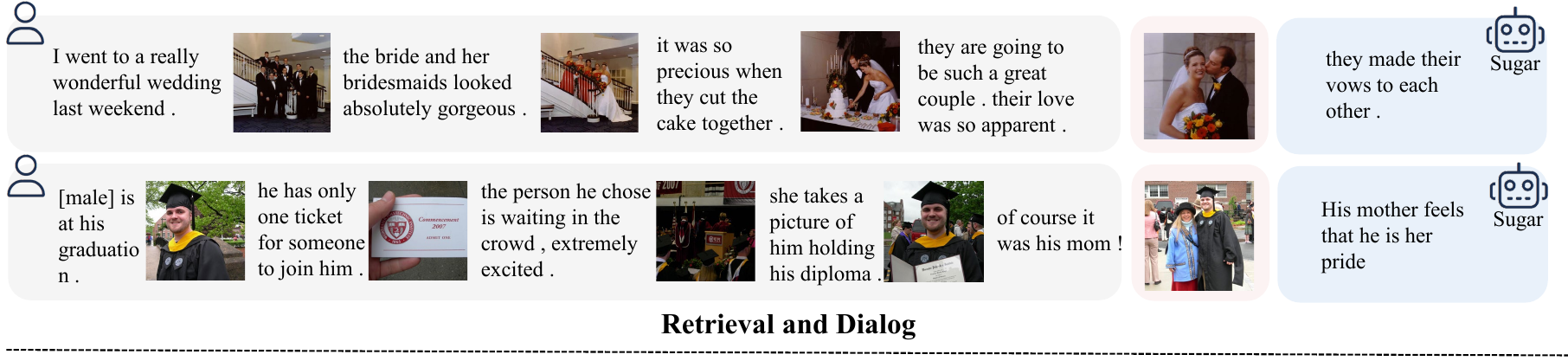}
        \includegraphics[width=0.99\textwidth]{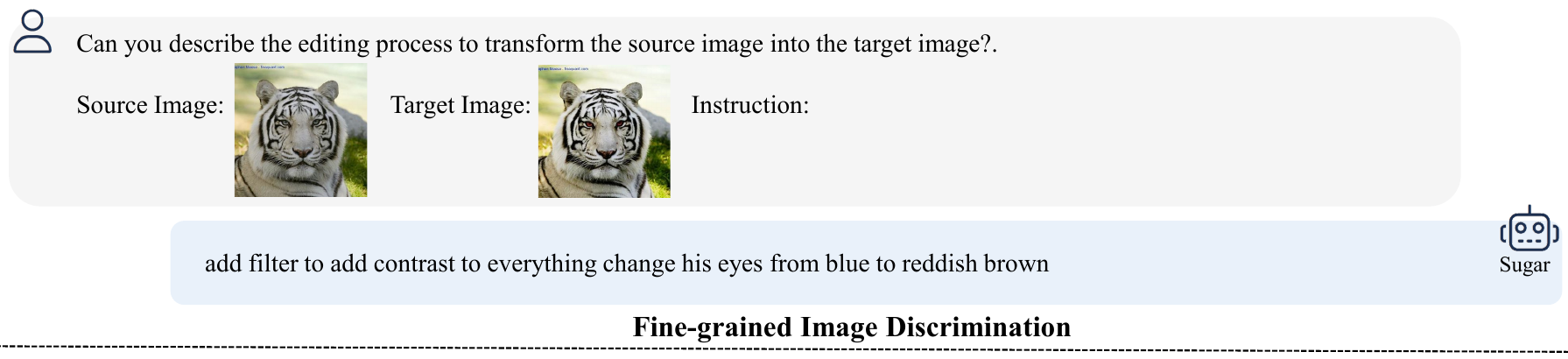}
        \includegraphics[width=0.95\textwidth]{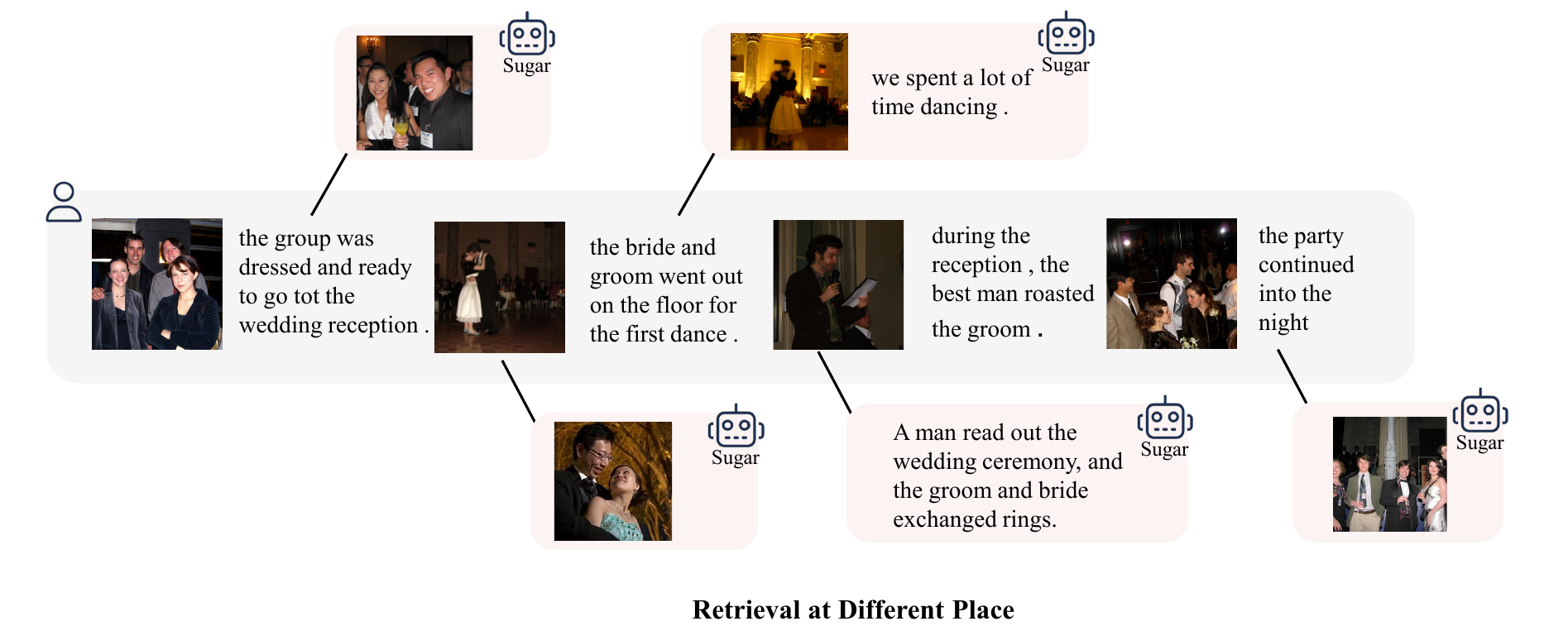}
	\vspace{-1mm}
	\caption{
		Selected examples for various image-text tasks. The \colorbox[HTML]{FCF3F3}{pink background} indicates retrieval results, while the \colorbox[HTML]{D4E5F7}{blue background} indicates generated results. More examples are provided in the Appendix~\ref{app_quality}.
	}
	\label{fig:select_exp}  
	\vspace{-3mm}
\end{figure}
\section{Experiments}
To assess \shortname's \textbf{generative} ability, we conduct a comprehensive comparison with state-of-the-art models on 11 commonly used visual-language benchmarks in Section~\ref{exp_llava}. Furthermore, we evaluate more complicated multimodal comprehension tasks on DEMON with 29 datasets in Section~\ref{exp_demon}.
For \textbf{discriminative} tasks, we compare performance across three different retrieval tasks: image-text retrieval, interleaved retrieval, and fine-grained retrieval in Section~\ref{exp_retrieval}. Subsequently, we leverage \shortname's discriminative ability for \textbf{retrieval-augmented generation} compared with common used retrieve module in Section~\ref{exp_rag}.
Finally, we conduct ablation experiments to analyze the effectiveness of our method in Section~\ref{exp_abla}.

\subsection{Setup}\label{setup}
We apply our method to VILA~\cite{lin2023vila}, a recent state-of-the-art MLLM supporting interleaved input. We further fine-tune VILA using LoRA~\cite{hu2021lora}. Details about the experiments setting, datasets and the instruction examples, please check in Appendix~\ref{exp_detail}.  

\subsection{Multimodal Comprehension on 11 Benchmarks}\label{exp_llava}
We conduct a comprehensive comparison with state-of-the-art models on 11 commonly used benchmarks, as shown in Table~\ref{main_llava}. 
Compared to existing models, \shortname{} achieves remarkable improvements over the second-best performing model on tasks requiring fine-grained semantics (\ie  LLaVA$^\text{Wd}$~\cite{liu2024visual}, VizWiz~\cite{gurari2018vizwiz}, SQA~\cite{lu2022learn} improve by 8\%, 4.5\%, 1.8\% respectively) and benchmarks for detecting hallucinations (\ie POPE~\cite{li2023evaluating}), while maintaining competitive results in other tasks. Notably, \shortname{} excels in discriminative tasks and still achieves 5 state-of-the-art results and 3 second-best results on 11 benchmarks for generative tasks, even outperforming some models larger than 7B. Our results demonstrate the benefits of incorporating the discriminative loss, aiding in fine-grained semantic tasks and reducing hallucinations.
\begin{table*}[t]
\setlength{\tabcolsep}{3pt}
\centering
\scalebox{0.7}{
\begin{tabular}{l ll | lllll | lllll l }
\toprule
Method & LLM & Res. & VQA$^\text{v2}$ & GQA & VizWiz & SQA$^\text{I}$ & VQA$^\text{T}$ & POPE & MME$^\text{P}$ & MME$^\text{C}$ & MMB & LLaVA$^\text{Wd}$ & MM-Vet \\
\midrule
BLIP-2~\cite{li2023blip} & Vicuna-13B & 224 & 41.0 & 41 & 19.6 & 61 & 42.5 & 85.3 & 1293.8 & 290.0 &-- & 29.1 & 22.4 \\

InstructBLIP~\cite{dai2024instructblip} & Vicuna-7B & 224 & -- & 49.2 & 34.5 & 60.5 & 50.1 & -- & -- & -- & 36 & -- & 26.2 \\

InstructBLIP~\cite{dai2024instructblip} & Vicuna-13B & 224  & -- & 49.5 & 33.4 & 63.1 & 50.7 & 78.9 & 1212.8 & 291.9 & -- & -- & 25.6 \\

Shikra~\cite{chen2023shikra} & Vicuna-13B & 224 & 77.4$^*$ & -- & -- & -- & -- & -- & -- & -- & 58.8 & -- & --  \\

IDEFICS-9B~\cite{laurenccon2023introducing} & LLaMA-7B & 224 & 50.9 & 38.4 & 35.5 & -- & 25.9 & -- & -- & -- & 48.2 & -- & -- \\

IDEFICS-80B~\cite{laurenccon2023introducing} & LLaMA-65B & 224 & 60.0 & 45.2 & 36.0 & -- & 30.9 & -- & --& -- & 54.5 & -- & -- \\

Qwen-VL~\cite{bai2023qwen} & Qwen-7B & 448 & 78.8$^*$ & 59.3$^*$ & 35.2 & 67.1 & 63.8 & -- & -- & -- & 38.2 & -- & -- \\

Qwen-VL-Chat~\cite{bai2023qwen} & Qwen-7B & 448 & 78.2$^*$ & 57.5$^*$ & 38.9 & \underline{68.2} & 61.5 & -- & 1487.5 & \textbf{360.7} & 60.6 & -- & -- \\

LLaVA-1.5~\cite{liu2023improved} & Vicuna-7B & 336 & 78.5$^*$ & \underline{62.0}$^*$ & 50.0 & 66.8 & 58.2 & \underline{85.9} & 1510.7 & --& 64.3 &  49.0 & 30.5 \\ 

VILA-7B~\cite{lin2023vila} & Llama-2-7B & 336  & 79.9$^*$  & \textbf{62.3}$^*$ & \underline{57.8} & \underline{68.2} & \underline{64.4} & 85.5 & \underline{1533.0} & 296.1 &\textbf{68.9} & 70.0 & \textbf{34.9} \\

\midrule

\rowcolor{black!4}
\textbf{\shortname{}} & Vicuna-7B & 336  & 76.0$^*$  & 58.7$^*$ & \textbf{60.4} & \textbf{69.4} & 57.5 & \textbf{86.6} & \textbf{1550.8}  & \underline{300.0} & \underline{64.9} & \textbf{75.6} & \underline{31.3} \\


\bottomrule
\end{tabular}
}
\caption{
Comparison with state-of-the-art methods on 11 visual-language benchmarks. We mark the best performance \textbf{bold} and the second-best \underline{underlined}.  
Benchmark names are abbreviated due to space limits. VQA-v2~\cite{goyal2017making}; GQA~\cite{hudson2019gqa}; VizWiz~\cite{gurari2018vizwiz}; SQA$^\text{I}$: ScienceQA-IMG~\cite{lu2022learn}; VQA$^\text{T}$: TextVQA~\cite{singh2019towards}; POPE~\cite{li2023evaluating}; MME$^\text{P}$, MME$^\text{C}$: MME Perception, MME Cognition~\cite{yin2023survey}; MMB: MMBench~\cite{liu2023mmbench}; LLaVA$^\text{Wd}$: LLaVA-Bench(In-the-Wild)-Detail~\cite{liu2024visual}; MM-Vet~\cite{yu2023mmvet}. $^*$ indicates the training images of the datasets are observed during training. 
}
\label{main_llava}
\end{table*}

\subsection{Complicated Multimodal Comprehension on DEMON}\label{exp_demon}
Table~\ref{main_demon} demonstrates the superior performance of \shortname{} on the DEMON benchmark, which comprises 7 categories and a total of 29 sub-tasks. These tasks are considerably more complex than the previously used 11 common benchmarks. DEMON is tailored to evaluate the capacity of models and systems to understand demonstrative instructions that include multiple, interleaved, and multimodal contexts, presenting the essential information needed to complete a task.
\shortname{} surpasses the previous state-of-the-art model on the DEMON benchmark, VPG-C~\cite{li2023fine}, across 6 of 7 categories. For example, we achieve performance improvements of 36.1\% in Text-Rich Images QA (TRQA) tasks and 17.2\% in Visual Relation Inference (VRI) tasks, both of which require detailed semantics, compared to the second-best performing model.
This underscores our advanced ability to associate interleaved text-image inputs for stronger in-context understanding, and \shortname's strong capability to capture global semantics in interleaved sequences, facilitated by joint training with discriminative loss.
\begin{table}[t]
	\setlength{\tabcolsep}{3pt}
	\centering
	\scalebox{0.7}{
	\begin{tabular}{lcccccccc }
		\hline
		& LLM                     & MMD                                  & VST                                  & VRI                                  & MMC                                  & KGQA                                 & TRQA                                 & MMR                                  \\ \hline
		OpenFlamingo~\cite{awadalla2023openflamingo} &  MPT-7B  & 16.9                                 & 24.2                                 & 13.9                                 & \underline{21.7}                                 & 32.0                                 & 30.6                                 & 41.6                                 \\
		BLIP-2~\cite{li2023blip} &   Vicuna-13B & 26.1                                 & 21.3                                 & 10.7                                 & 17.9                                 & 39.2                                 & 33.5                                 & 39.7                                 \\
		InstructBLIP~\cite{dai2024instructblip} & Vicuna-7B & 33.6                                 & 24.4                                 & 11.5                                 & 21.2                                 & 47.4                                 & 44.4                                 & 48.6                                 \\
		MiniGPT-4~\cite{zhu2023minigpt} & Vicuna-7B    & 13.7                                 & 17.1                                 & 8.0                                  & 16.6                                 & 30.3                                 & 26.4                                 & 43.5                                 \\
		LLaVA~\cite{liu2024visual}   & Vicuna-7B & 7.8                                  & 10.7                                 & 8.3                                  & 15.9                                 & 36.2                                 & 28.3                                 & 41.5                                 \\
		mPlug-Owl~\cite{ye2023mplug}    &   LLaMA-7B  & 12.7                                 & 19.3                                 & 5.4                                  & 16.3                                 & 33.3                                 & 32.5                                 & 42.5                                 \\
		VPG-C~\cite{li2023fine}    & Vicuna-7B & 37.5                                 & 25.2                                 &  \underline{25.9} & \textbf{22.2} & 48.6                                 &  \underline{44.9} & \underline{50.3} \\
		
		VILA-7B~\cite{lin2023vila}    & Vicuna-7B     &  \underline{47.8} & \underline{25.8} & 13.2                                 & 17.2                                &  \underline{60.1}& 42.1                                 & 50.5                                \\
		
		\midrule
		\rowcolor{black!4}
		\textbf{\shortname{}} &  Vicuna-7B &  \textbf{51.8}   &  \textbf{34.3}            &    \textbf{32.3}         &    16.8         &  \textbf{64.4}   &      \textbf{65.9}       &     \textbf{51.7}    \\ \hline
	\end{tabular}
	}
	\vspace{1mm}
	\caption{Comparision with state-of-the-art method on DEMON~\cite{li2023fine} benchmark.}
	\label{main_demon}
        \vspace{-4mm}
\end{table}

\subsection{Zero-shot Cross-modal Information Retrieval}\label{exp_retrieval}
\textbf{Image-text Retrieval.}
We evaluated the performance of \shortname{} on the widely adopted MSCOCO~\cite{karpathy2015deep} dataset in the context of a standard image-text retrieval task. \shortname{} demonstrated comparable performance to FROMAGe~\cite{koh2023grounding} in R@$1$ and surpassed it in R@$5$ and R@$10$, highlighting \shortname{}'s superiority in normal retrieval tasks.

\textbf{Interleaved Retrieval.}\label{exp_interleaved_retrieavl}
To assess the proficiency of \shortname{} in processing multimodal contextual information, we evaluated its performance in retrieving relevant images conditioned on sequences of interleaved image-text inputs from the Visual Storytelling (VIST) dataset~\cite{huang2016visual}. We conducted evaluations across several experimental configurations, following the same setup as FROMAGe~\cite{koh2023grounding} (see Appendix~\ref{app_retrieval}). Our results show that \shortname{} outperforms FROMAGe in most settings, particularly achieving a 20.3\% improvement in the 5c+4i configuration, significantly surpassing both CLIP and BLIP-2. This demonstrates that our method effectively leverages MLLMs' ability to handle complex interleaved sequence inputs, thereby achieving superior retrieval performance.

\textbf{Fine-grained Retrieval.}\label{exp_finegrained}
We tested fine-grained retrieval using the Winoground dataset~\cite{thrush_and_ross2022winoground}, which evaluates the ability to perform vision-linguistic compositional reasoning. Surprisingly, \shortname{} outperformed all discriminative pre-trained models (both single-stream and dual-stream encoder architectures), achieving improvements of 5.3\%, 77.1\%, and 86.2\% over the second-best model in the Text, Image, and Group dimensions, respectively. This demonstrates \shortname's strong capability to distinguish detailed semantics and performing compositional reasoning.
\begin{table}[t]
	\centering
	\setlength{\tabcolsep}{3pt}
	\scriptsize
	\begin{tabular}{ccc}
		\begin{subtable}[t]{0.3\textwidth}
			\centering
			\resizebox{0.9\linewidth}{!}{
				\begin{tabular}{lccc}
					\multicolumn{4}{c}{\textbf{(a) MSCOCO}}\\ 
					\noalign{\smallskip}
					\toprule
					\textbf{Model} & \multicolumn{1}{l}{\textbf{R@1}} & \multicolumn{1}{l}{\textbf{R@5}} & \multicolumn{1}{l}{\textbf{R@10}} \\ \hline
					\multicolumn{4}{c}{Text $\to$ Image} \\ 
					FROMAGe(d)   & \textbf{23.4}                & 47.3               & 59.0                \\
					FROMAGe(g+d) & 23.4                         & 47.2                         & 58.0                          \\
					\rowcolor{black!4}
					\textbf{\shortname}         &   22.0 &  \textbf{49.1}   & \textbf{63.1} \\ \midrule
					\rowcolor{white}
					\multicolumn{4}{c}{Image $\to$ Text}  \\ 
					FROMAGe(d)   & \textbf{26.8}                & 52.4               & 63.6                 \\
					FROMAGe(g+d) & 26.4                         & 52.3                         & 63.4                          \\
					\rowcolor{black!4}
					 \textbf{\shortname}         &  25.6 &  \textbf{53.6} &  \textbf{66.7}   \\
					\bottomrule      
				\end{tabular}
			}
		\end{subtable} &
		\begin{subtable}[t]{0.35\textwidth}
			\centering
			\resizebox{\linewidth}{!}{
				\begin{tabular}{lcccc}
					\multicolumn{5}{c}{\textbf{(b) VIST}}\\ 
					\noalign{\smallskip}
					\toprule
					\textbf{Model} & \textbf{Inputs}  &  \textbf{R@1} & \textbf{R@5} &  \textbf{R@10} \\ \midrule
					CLIP ViT-L/14  &  5c   &  5.9 &  19.5 & 28.0  \\
					FROMAGe  &  5c &  \textbf{11.9}   &  23.8  &   31.7  \\ 
					\rowcolor{black!4} 
					\textbf{\shortname} & 5c & 10.1 & \textbf{26.3} & \textbf{36.2} \\
					\midrule
					\rowcolor{white}
					BLIP$^{\dagger}$  &  5c  &  6.2  &  16.8 &  23.4 \\  
					CLIP ViT-L/14$^{\dagger}$  &  5c  &  8.8  &  22.3 &  29.8 \\
					FROMAGe$^{\dagger}$  &  5c  &   \textbf{13.2} &  \textbf{28.5}  &   36.7  \\  \rowcolor{black!4} 
					\textbf{\shortname}$^{\dagger}$& 5c & 11.0 & 27.3  &  \textbf{37.0} \\ 
					\rowcolor{white}   
					CLIP ViT-L/14  &  5c+4i  &  2.4   &  21.3   &  34.0   \\
					FROMAGe$^{\dagger}$ \hspace{5mm}  &  5c+4i  &  18.2   &  42.7 &   51.8  \\ 
					\rowcolor{black!4} 
					\textbf{\shortname}$^{\dagger}$ & 5c+4i & \textbf{21.9} & \textbf{46.7} & \textbf{59.2} \\  
					\bottomrule
				\end{tabular}
			}
		\end{subtable} &
		\begin{subtable}[t]{0.25\textwidth}
			\centering
			\resizebox{0.99\linewidth}{!}{
				\begin{tabular}{lccc}
					\multicolumn{4}{c}{\textbf{(c) Winoground}}\\ 
					\noalign{\smallskip}
					\toprule
					\textbf{Model} & \multicolumn{1}{l}{\textbf{Text}} & \multicolumn{1}{l}{\textbf{Image}} & \multicolumn{1}{l}{\textbf{Group}} \\ \hline
					VinVL   & 37.8  & 17.8 & \underline{14.5}   \\
					UNITER$_{large}$   & \underline{38.0}  & 14.0  & 10.5   \\
					VisualBERT$_{base}$   & 15.5  & 2.5  & 1.5   \\
					ViLLA$_{large}$   & 37.0  & 13.25  & 10.0   \\
					ViLT ViT-B/32   & 34.8  & 14.0  & 9.3   \\
					LXMERT   & 19.3  & 7.0  & 4.0   \\
					ViLBERT$_{base}$   & 23.8  & 7.3  & 4.8   \\
					FLAVA$_{ITM}$   & 32.3  & \underline{20.5}  & 14.3   \\
					FLAVA$_{contrastive}$   & 25.3  & 13.5  & 9.0   \\
					CLIP ViT-B/32   & 30.8  & 10.5  & 8.0   \\
					\rowcolor{black!4}
					\textbf{\shortname}         &  \textbf{40.0}       &  \textbf{36.3}      & \textbf{27.0}  \\ 
					\bottomrule      
				\end{tabular}
			}
		\end{subtable}
	\end{tabular}
	\vspace{0.06in}
	\caption{Retrieval results compared with previous models, reported by Recall@$k$ for (a)(b) and Accuracy (\%) for (c). \textbf{(a) MSCOCO} for image-text retrieval: FROMAGe(d) indicates the FROMAGe model pre-trained only with discriminative loss, and FROMAGe(g+d) indicates joint training with both discriminative and generative losses. \textbf{(b) VIST} for interleaved retrieval: $^{\dagger}$ indicates retrieval over images not previously seen in the story sequence. "5c+4i" is shorthand for 5 captions and 4 images, and "5c" is shorthand for 5 captions. \textbf{(c) Winoground} for fine-grained retrieval.}
    \label{table:exp_retrieval}
    \vspace{-3mm}
\end{table}

\subsection{Retrieval-Augmented Generation}\label{exp_rag}
Due to \shortname's dual capabilities in both discrimination and generation, we can achieve retrieval augmentation without the need for an additional retrieval module.
For performing retrieval-augmented generation (RAG), we selected two tasks, namely VizWiz and SQA$^\text{I}$, as they offer held-in data that were not seen during model training. We utilized a mixed set comprising the widely-used LLaVA-1.5 SFT subset and the held-in datasets of the two tasks as the knowledge base and employed different retrieval modules to retrieve relevant knowledge for the MLLM.
The results are as follows: \textit{(i)} We observed a drop in performance for LLaVA-1.5 with RAG in all tasks. This may be because LLaVA is designed solely for single-image input, without the ability to utilize in-context external knowledge.
\textit{(ii)} Compared to VILA, \shortname's performance improved in both tasks, whereas VILA improved in SQA$^\text{I}$ but decreased in VizWiz. These findings suggest that \shortname's retrieved knowledge is more beneficial, while the knowledge retrieved by CLIP and BLIP-2 may hinder performance.

\subsection{Ablations}\label{exp_abla}
\definecolor{darkgreen}{rgb}{0.0, 0.5, 0.0}
\begin{table}[t]
	\centering
	\setlength{\tabcolsep}{3pt}
	\scriptsize
	\begin{tabular}{cc}
		\begin{subtable}[t]{0.45\textwidth}
			\centering
			\resizebox{0.7\linewidth}{!}{
            \begin{tabular}{lcc}
            \multicolumn{3}{c}{\textbf{(a) Retrieval-Augmented Generation}}\\ 
            \noalign{\smallskip}
            \toprule
              & VizWiz & SQA$^\text{I}$    \\ \hline
            LLaVA-1.5 & 50.0  &66.8  \\
            \quad +CLIP & 42.6{\tiny \textcolor{darkgreen}{($-14.8\%$)}} &62.0 {\tiny \textcolor{darkgreen}{($-8.6\%$)}}  \\
            \quad +BLIP2 & 43.0{\tiny \textcolor{darkgreen}{($-14.0\%$)}} & 62.5 {\tiny \textcolor{darkgreen}{($-6.4\%$)}} \\
            \hdashline 
            
            VILA &57.8 & 64.4 \\
            \quad +CLIP & 49.3 {\tiny \textcolor{darkgreen}{($-14.7\%$)}} &65.7 {\tiny \textcolor{red}{($+0.6\%$)}} \\
            \quad +BLIP2 & 49.6 {\tiny \textcolor{darkgreen}{($-14.2\%$)}} & 66.1 {\tiny \textcolor{red}{($+2.6\%$)}} \\
            \hdashline 
            \textbf{\shortname}& 60.4 & 69.4 \\
            \quad +RAG &\textbf{61.9} {\tiny \textcolor{red}{($+2.5\%$)}}& \textbf{71.9} {\tiny \textcolor{red}{($+3.6\%$)}} \\
            \bottomrule      
            \end{tabular}
            }
		\end{subtable} &
		\begin{subtable}[t]{0.5\textwidth}
			\centering
			\resizebox{\linewidth}{!}{
				\begin{tabular}{lcccccc}
					\multicolumn{7}{c}{\textbf{(b) Ablation Study}}\\ 
					\noalign{\smallskip}
					\toprule
					 & \multicolumn{3}{c}{Generative Tasks}  &  \multicolumn{3}{c}{Discriminative Tasks} \\ 
					  & SQA$^\text{I}$ & POPE & KGQA & MSCOCO & VIST & Winoground \\\midrule
 					\textbf{\shortname} & 72.6 & \textbf{86.6} & 64.4 & 49.1 & \textbf{46.7} & \textbf{36.3} \\
                    \hdashline 
 					 \quad w/o data$_d$ & \textbf{72.8} & 85.3 & 61.7 &\textbf{49.6}  & 42.0 & 34.8 \\ 
 					 \quad w/o data$_g$ & 68.0 & 86.4 & 62.5 & 46.0 & 40.7 & 33.5 \\  
                    \hdashline 
 					 \quad w/o GAK & 72.1 & 86.0 &61.1 & 48.2 &33.5  & 29.3 \\ 
                            \quad w/o TK& 71.6 & 85.3 &63.9 & 49.0  & 44.1 &  20.5\\   
					 \quad w/o AvgPool& 72.4 & 86.2 & \textbf{64.6} & 39.7 & 38.1 & 31.5 \\  
					\bottomrule
				\end{tabular}
			}
		\end{subtable}
	\end{tabular}
	\vspace{0.06in}
\caption{(a) \textbf{Retrieval-Augmented Generation.} (b) \textbf{Ablation Study.} For MSCOCO, we report the R@$5$ in text-to-image retrieval. For VIST, we report the R@$5$ of retrieving an image given 5 captions and 4 images. For Winoground, we report the Image score. For other tasks, we report Accuracy (\%).}
	\label{table:exp_ab}
 \vspace{-8mm}
\end{table}

\textbf{Importance of Both Tasks.} 
\textbf{(1) w/o data$_g$}: As shown in Table~\ref{table:exp_ab}, when we reduce the amount of data for discriminative tasks (Row 2), there are performance drops of 1.5\% in hallucination detection tasks (\ie POPE) and 4.2\% in interleaved multi-modal comprehension tasks (\ie KGQA).
\textbf{(2) w/o data$_d$}: Similarly, reducing the data for generative tasks (Row 3), the performance on generative tasks declines with a 10.1\% decrease in VIST, which requires global semantics capturing, and a 4.1\% decrease in Winoground, which necessitates fine-grained semantic understanding.
This indicates that \textbf{generative and discriminative training can mutually benefit each other}.

\textbf{Effectiveness of Individual Components.}
\textbf{(1) w/o GAK}: When we exclude the Global Alignment Kernel (GAK) (Row 4) and resort to using the average similarity for the slices, a notable decrease in performance is observed across several interleaved image-text tasks (\ie a 5.1\% decrease in KQGA and a 28.3\% decrease in VIST). This underscores the fundamental role of GAK in aiding \shortname{} to capture global semantics effectively.
\textbf{(2) w/o TK}: Upon removal of the Triple Kernel (TK) (Row 5) and utilization of CLIP for encoding the input sequence instead, a dramatic performance decline is evident in Winoground, with a 43.5\% decrease. This underscores the significant role of TK in facilitating the distinction of detailed semantics.
\textbf{(3) w/o AvgPool}: When solely using the last token for retrieval, a general decline in performance is observed across discriminative tasks, with decreases of 19.1\% for MSCOCO, 22.6\% for VIST, and 13.2\% for Winoground. This phenomenon may be attributed to the last token of an image often corresponding to a pooling token, containing relatively weaker semantic information. Utilizing all image tokens and performing AvgPooling tends to yield greater improvements in retrieval tasks.
\section{Conclusion}
Vision-Language Models (VLMs) have been trained using both generative and discriminative paradigms, each with distinct advantages and limitations. To bridge this gap, we introduce \longname{}, which imposes semantic relationships between input samples, thereby enhancing the MLLM's ability to capture global semantics and distinguish fine-grained details. This approach effectively balances generative and discriminative tasks, yielding synergistic benefits.
Extensive experiments demonstrate the effectiveness of our approach, achieving state-of-the-art results in multiple generative tasks, particularly those requiring cognitive and discrimination abilities, while also demonstrating competitive performance in discriminative tasks such as image-text retrieval and achieving state-of-the-art results in interleaved and fine-grained retrieval. Furthermore, employing a retrieval-augmented generation strategy within a single model leads to additional improvements, offering a promising direction for future research.

\textbf{Acknowledgment.} This  work  has  been  supported  in  part  by  the  Key Research and Development Projects in Zhejiang Province (No. 2024C01106, 2024C01028), the NSFC (No. 62272411), the National Key Research and Development Project of China (2018AAA0101900), and Ant Group.
	
\bibliography{main.bbl}
\bibliographystyle{plain}
\clearpage
\appendix
\section{Broader Impact}\label{bimpact}
The broader impact of \shortname, carries both potential benefits and risks upon deployment and release. Some considerations are unique due to their visual nature, while others mirror existing instruction-following Large Language Models (LLMs). Built upon Vicuna, CLIP, DINOv2, and BGE, \shortname{} inherits issues associated with LLMs and vision encoders. Below, we outline risks and mitigation strategies for its release.

\paragraph{Hallucination.}
Similar to other MLLMs, \shortname{} may generate outputs detached from facts or input data, posing concerns, especially in critical applications like medicine and the field related to security.

\paragraph{Biases.}
Bias from base models can transfer to \shortname, originating from both the vision encoder (CLIP) and the language decoder (Vicuna), potentially leading to biased outcomes or unfair representations.

\paragraph{Ethical Impacts.}
This study doesn't raise ethical concerns, as it doesn't involve subjective assessments or private data, only utilizing publicly available datasets.

\paragraph{Expected Societal Implications.}
A significant societal concern lies in potential misuse, such as fabricating unauthorized texts leading to misinformation, privacy breaches, and other damaging consequences. Strong ethical standards and ongoing surveillance are essential for mitigation.

These issues aren't unique to our method but are prevalent across different techniques for multi-concept customization. Despite the risks, we believe the benefits outweigh the potential harm, allowing ongoing investigation and improvement of the model while engaging the community in developing better mitigation strategies. Moreover, its release can foster new applications and research directions, contributing to the progress and responsible deployment of foundation models in vision-language tasks.

\section{Limitations}\label{limi}
\textit{(i)} Our method, while effective, may inherit limitations from the underlying models, such as hallucination in generating outputs detached from facts or input data and potential biases originating from the model we used.
\textit{(ii)} The training data might inevitably contain mismatched image and text, which could adversely affect training.

\section{Mathematical Proof}\label{app_proof}

\begin{theorem}\label{complexity of GAK}
	The alignment kernel $K$ can be computed in quadratic complexity, namely in $O(mnd^2)$ iterations. where $m,n$ denotes the length of two sequence and their hidden dimension all is $d$, $m,n,d \in \mathbb{R}$.
\end{theorem}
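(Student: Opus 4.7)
The plan is to prove the complexity bound via standard dynamic programming on a two-dimensional table, exploiting the recursive decomposition of alignments that underpins the Dynamic Time Warping family of algorithms. Although the set of alignments $\mathcal{A}(\mathbf{x},\mathbf{y})$ has cardinality exponential in $m+n$, the factorized product structure of each summand lets us recycle partial sums, turning the brute-force sum of equation~\eqref{eq_gak} into a quadratic-time recurrence of the kind introduced by Cuturi for the global alignment kernel.

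First, I would introduce a table $M\in\mathbb{R}^{(m+1)\times(n+1)}$ where $M[i,j]$ denotes the partial GAK value restricted to alignments of the prefixes $(x_1,\dots,x_i)$ and $(y_1,\dots,y_j)$, with the boundary initialization $M[0,0]=1$ and $M[i,0]=M[0,j]=0$ for $i,j\geq 1$. Second, I would establish the recurrence
\begin{equation}
M[i,j]=e^{-\phi_\sigma(x_i,y_j)}\bigl(M[i-1,j-1]+M[i-1,j]+M[i,j-1]\bigr),
\end{equation}
justified by partitioning each alignment ending at position $(i,j)$ according to its immediate predecessor, which must lie in $\{(i-1,j-1),(i-1,j),(i,j-1)\}$ by the monotonicity and unit-step constraints of an alignment path (see Appendix~\ref{app_sa}). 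Correctness then follows by induction on $i+j$, and the final kernel value is recovered as $K(\mathbf{x},\mathbf{y})=M[m,n]$.

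For the cost analysis, the table has $(m+1)(n+1)=O(mn)$ cells. Filling a cell requires one evaluation of $\phi_\sigma(x_i,y_j)$, followed by $O(1)$ multiplications and additions against the three neighbouring cells. The slice similarity $\varphi(x_i,y_j)$ is the triple kernel from equation~\eqref{tripe}, which involves squared-norm computations on vectors in $\mathbb{R}^d$ together with the auxiliary $\log(2-e^{-\cdot})$ term; accounting for the composite sub-vector structure of the triple kernel and the associated normalization by $\sigma=\delta\sqrt{(M+N)/2}$, this evaluation is bounded by $O(d^2)$ operations. Multiplying out gives the claimed $O(mnd^2)$ bound.

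The main obstacle I anticipate is not the recurrence itself, which is classical, but rather the clean justification that the recursion is \emph{exhaustive and non-overlapping} over $\mathcal{A}(\mathbf{x},\mathbf{y})$: one must verify that every alignment is counted exactly once as its final transition is peeled off, and that no spurious alignments violating the boundary conditions are introduced. A secondary subtlety is pinning down the per-cell cost as $O(d^2)$ rather than a smaller or larger quantity, which requires making explicit which operations of the triple kernel (distance, exponentiation, logarithm) dominate under the concatenated representation $a=\text{concat}(a_1,a_2)$.
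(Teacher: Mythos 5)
Your proposal follows essentially the same route as the paper: the same dynamic-programming table with boundary conditions $M[0,0]=1$, $M[i,0]=M[0,j]=0$, the same three-neighbor recurrence $M[i,j]=e^{-\phi_\sigma(x_i,y_j)}\left(M[i-1,j-1]+M[i-1,j]+M[i,j-1]\right)$, and the same observation that this is the sum-product analogue of the DTW max-sum recursion, giving $O(mn)$ cells times the per-cell kernel-evaluation cost. Your elaboration (partitioning alignments by their final step, induction on $i+j$, and the accounting of the per-cell cost) simply fills in details that the paper states only in passing.
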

\begin{proof}
	Given $\textbf{x} = (x_1,\dots, x_n)$ and $\textbf{y} = (y_1,\dots, y_m)$ two sequences of $\mathcal{X}^\star$, we set the double-subscripted series $M_{i,j}$ as $M_{i,0} = 0$ for $i = 1, . . . , n$, $M_{0,j} = 0 $ for $j = 1, . . . , m$, and $M_{0,0} = 1$. Computing recursively for $(i, j) \in \{1, . . . , n\} \times \{1, . . . , m\}$  the terms
	$$
	M_{i,j} = (M_{i,{j-1}} + M_{{i-1},{j-1}} + M_{{i-1},j} ) k (x_i, y_j )
	$$
	we obtain that $K(\textbf{x}, \textbf{y}) = M_{n,m}$
	The result can be proved by recursion and is intuitively an equivalent of the Dynamic Time Warping(DTW)~\cite{muller2007dynamic, cuturi2017soft} algorithm where the max-sum algebra is simply replaced by the sum-product one~\cite{cuturi2007kernel}.
\end{proof}

\begin{theorem}\label{gdfortriple}
	triple kernel $\varphi$ is a conditionally symmetric positive-definite kernel~\cite{cuturi2007kernel} defined on $\mathcal{X} \times \mathcal{X} \to \mathbb{R}$.
\end{theorem}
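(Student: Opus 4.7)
The plan is to verify that the triple kernel $\varphi$ is symmetric and satisfies the conditional semi-definiteness condition required by the Cuturi (2007) framework, i.e.\ the condition under which $e^{-\varphi/(2\sigma^2)}$ is a positive-definite kernel and the global alignment kernel $K$ in Equation~\ref{eq_gak} is well-posed. This reduces to showing that $\varphi$ is a conditionally negative-definite (CND) function on $\mathcal{X}$, after which Schoenberg's theorem yields the stated conclusion.

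For symmetry, I observe that in each of the three branches of the definition, $\varphi(a,b)$ equals a squared Euclidean norm of the form $\|u(a)-u(b)\|^2$, which is trivially invariant under swapping $a$ and $b$, so symmetry holds uniformly across the three cases. For the CND property, the workhorse identity is that for any vectors $\{u_i\}$ in a Hilbert space and scalars $\{c_i\}$ with $\sum_i c_i = 0$,
\begin{equation*}
\sum_{i,j} c_i c_j \|u_i - u_j\|^2 \;=\; -2\,\Bigl\|\sum_i c_i u_i\Bigr\|^2 \;\leq\; 0,
\end{equation*}
so any function that coincides with a squared Euclidean distance after composing with a fixed Hilbert-space embedding is automatically CND. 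Case (ii) falls into this template immediately by taking $u_i = s_i$. For case (i) I would construct a single Hilbert embedding $\Phi : \mathcal{X} \to \mathcal{H}$ that recovers the piecewise formula as one squared distance, placing each slice into modality-tagged orthogonal subspaces (uni-modal parts in modality-specific blocks, CLIP parts in a shared block) and verifying that the resulting $\|\Phi(a)-\Phi(b)\|^2$ agrees with $\varphi(a,b)$ in all three branches.

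The main obstacle I anticipate is precisely in case (i): the rule for selecting between $\|a_1 - b_1\|^2$ and $\|a_2 - b_2\|^2$ depends jointly on the modality labels of \emph{both} arguments, so a naive orthogonal-direct-sum construction leaves residual cross-modality contributions that do not vanish for same-modality pairs. Handling this carefully would require decomposing any finite test set of slices by modality, analysing the resulting block structure of the Gram-like matrix $[\varphi(s_i,s_j)]$, and invoking closure of the CND class under pointwise sums, scalar multiplication by non-negative constants, and Hilbert embeddings, in order to patch the contributions of the uni-modal and cross-modal blocks. The unit-norm normalisation $|a_1|=|a_2|=1$ is likely to play a role here, since it controls the scales of the different sub-blocks. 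Once a consistent embedding or a direct block-wise argument is in place, conditional negative definiteness is immediate, and the positive-definiteness of the local term $e^{-\varphi_\sigma}$ used in $K$ follows from Schoenberg's theorem, placing $\varphi$ within the Cuturi (2007) framework as claimed.
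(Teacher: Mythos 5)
Your approach is genuinely different from the paper's, and it is the one that correctly targets what Cuturi's framework actually requires: you reduce the claim to conditional negative definiteness (CND) of $\varphi$, so that $e^{-\varphi_\sigma}$ is positive definite by Schoenberg's theorem, and you try to exhibit a Hilbert embedding $\Phi$ with $\varphi(a,b)=\|\Phi(a)-\Phi(b)\|^2$. The paper does neither. It introduces a modality-dependent lift $a' = \text{concat}(a'_1,a'_2,a'_3)$ with zero-padding and a weighted sum of squared sub-distances, asserts the identity $\varphi(a,b)=\varphi(a',b')$ (which in fact already fails in the cross-modal branch, where the stated formula evaluates to $|a_2|^2+|b_2|^2=2$ rather than $\|a_2-b_2\|^2$), and then concludes from the pointwise bound $\varphi(a,b)\geq 0$ that $\sum_{i,j}c_ic_j\varphi(x_i,x_j)\geq 0$ for arbitrary coefficients, without ever imposing $\sum_i c_i = 0$. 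That final step is a non sequitur: pointwise nonnegativity of a symmetric bivariate function places no constraint on the sign of its quadratic form, and case (ii) of the same proof reduces to the false assertion that $\sum_{i,j}c_ic_j\|x_i-x_j\|^2\geq 0$ for unconstrained $c$ (take $n=2$, $c_1=1$, $c_2=-1$).

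The obstacle you flag in case (i) is not a repairable technicality in your construction — it is where the theorem itself fails. By Schoenberg, CND together with $\varphi(x,x)=0$ is \emph{equivalent} to the existence of your embedding $\Phi$ with $\varphi(a,b)=\|\Phi(a)-\Phi(b)\|^2$. Any such $\Phi$ would force $\Phi(a)=\Phi(b)$ whenever $\varphi(a,b)=0$, hence $\varphi(a,c)=\varphi(b,c)$ for every $c$; the piecewise rule violates this. Concretely, take text slices $a,b$ with $a_1=b_1$ but $a_2\neq b_2$, and an image slice $c$ with $c_2=a_2$. Then $\varphi(a,b)=\|a_1-b_1\|^2=0$ and $\varphi(a,c)=\|a_2-c_2\|^2=0$, yet $\varphi(b,c)=\|b_2-c_2\|^2>0$. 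Setting $x_1=a$, $x_2=b$, $x_3=c$ with coefficients $\lambda_1=-2$, $\lambda_2=\lambda_3=1$ (so $\sum_i\lambda_i=0$) gives
\begin{equation*}
\sum_{i,j=1}^{3}\lambda_i\lambda_j\,\varphi(x_i,x_j) \;=\; 2\lambda_2\lambda_3\,\varphi(b,c) \;=\; 2\,\varphi(b,c) \;>\; 0,
\end{equation*}
which violates CND. No modality-tagged orthogonal sum, and no block-wise patching via closure properties, can circumvent this: the three pairwise values are intrinsically incompatible with any Euclidean realization. Your plan correctly identifies the right criterion and localizes the difficulty precisely; the missing recognition is that this difficulty is fatal, and the theorem as stated — without further restrictions on admissible slice configurations that the paper does not impose — does not hold.
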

\begin{proof}
	(i) for two slice $a,b \in \mathbb{R}^d$, meets $|a|=|b|=2$ , $d=d_1+d_2$, $a=\text{concat}(a_1,a_2), b=\text{concat}(b_1,b_2)$, $a_1,b_1 \in \mathbb{R}^{d_1}, a_2,b_2 \in \mathbb{R}^{d_2}$ and $|a_1|=|a_2|=|b_1|=|b_2|=1$: 
	
	let $a'=\text{concat}(a'_1 , a'_2 , a'_3)$, $a'=\text{concat}(b'_1 , b'_2 , b'_3)$: when $a'(b')$ is from text modal, we let $a'=\text{concat}(a_1 , a_2 , \textbf{0})$($b'=\text{concat}(b_1 , b_2 , \textbf{0})$), and  $a'=\text{concat}(a_1 , \textbf{0}, a_2)$($b'=\text{concat}(b_1 ,\textbf{0},  b_2)$) for image modal. $\textbf{0}\in\mathbb{R}^{d_2}$.
	we can unify the Equation~\ref{tripe} first and second case in:
	\begin{align*}
		\varphi (a,b) &= \varphi (a',b') \\
		&= (|a'_2||b'_2|+|a'_3||b'_3|)||a'_1- b'_1||^2  + (1-|a'_2||b'_2|)||a'_2-b'_2||^2 \\
		&\quad + (1-|a'_3||b'_3|)||a'_3-b'_3||^2 \\
		& \ge (|a'_2||b'_2|+|a'_3||b'_3|)||a'_1- b'_1||^2 + 0 + 0\\
		& \ge 0
	\end{align*}
	As $(1-|a'_2||b'_2|) \ge 0$ and $(1-|a'_3||b'_3|) \ge 0$. for any family $\alpha_1, \alpha_2,... \alpha_n \in \mathcal{X}$ and $c_1,c_2,...,c_n \in \mathbb{R}$, we have that 
	$$
	\sum\limits_{i,j}^{} {{c_i}{c_j}\varphi ({x_i},{x_j}) = } \sum\limits_{i,j}^{} {{c_i}{c_j}\varphi ({x'_i},{x'_j}) \ge 0} 
	$$
	
	(ii) for two slice meets $|a|=|b|=1$: for any family $\alpha_1, \alpha_2,... \alpha_n \in \mathcal{X}$ and $c_1,c_2,...,c_n \in \mathbb{R}$, we have that 
	$$
	\sum\limits_{i,j}^{} {{c_i}{c_j}\varphi ({x_i},{x_j}) = } \sum\limits_{i,j}^{} {{c_i}{c_j}||{x_i} - {x_j}|{|^2} \ge 0} 
	$$
	Additionally, it's evident that for both (i) and (ii), $\varphi (a,b)=\varphi (b,a)$. Therefore, triple kernel $\varphi$ is a conditionally symmetric positive-definite kernel defined on $\mathcal{X} \times \mathcal{X} \to \mathbb{R}$ 
\end{proof}

\begin{theorem}\label{monotonically_increasing}
 	when both $\textbf{x}$ and $\textbf{y}$ contain only one slice, GAK is monotonically increasing with the directly calculated cosine similarity.
\end{theorem}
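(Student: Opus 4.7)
The plan is to specialize each ingredient of the global alignment kernel $K$ in Equation~\ref{eq_gak} to the single-slice case and then show that the resulting scalar function is a strictly monotonic function of the cosine similarity $\langle x, y \rangle$.

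First, I would observe that when $\mathbf{x}=(x)$ and $\mathbf{y}=(y)$ are each of length $1$, the set $\mathcal{A}(\mathbf{x},\mathbf{y})$ of admissible alignments contains a single element, namely the trivial alignment $\pi$ with $|\pi|=1$ and $\pi_1(1)=\pi_2(1)=1$. Consequently the sum and product in Equation~\ref{eq_gak} collapse to a single factor, giving
\begin{equation*}
K(\mathbf{x},\mathbf{y}) \;=\; e^{-\phi_\sigma(x,y)}, \qquad \phi_\sigma(x,y) \;=\; \frac{\varphi(x,y)}{2\sigma^2} + \log\!\left(2 - e^{-\varphi(x,y)/(2\sigma^2)}\right).
\end{equation*}

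Second, because $\mathbf{x}$ and $\mathbf{y}$ contain one slice, we fall into case (ii) of the Triple Kernel (Equation~\ref{tripe}), so $\varphi(x,y) = \|x-y\|^2$. Since the encoder outputs are normalized, $\|x\|=\|y\|=1$, and therefore $\varphi(x,y) = 2 - 2\langle x, y\rangle$. Introducing the shorthand $t := \varphi(x,y)/(2\sigma^2) = (1-\langle x, y\rangle)/\sigma^2 \ge 0$ and $u := e^{-t} \in (0,1]$, the kernel simplifies to
\begin{equation*}
K(\mathbf{x},\mathbf{y}) \;=\; \exp\!\bigl(-t - \log(2-u)\bigr) \;=\; \frac{u}{2-u}.
\end{equation*}

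Third, I would check monotonicity by elementary calculus. The map $u \mapsto u/(2-u)$ has derivative $2/(2-u)^2 > 0$ on $(0,1]$, hence it is strictly increasing in $u$. In turn $u = e^{-t}$ is strictly decreasing in $t$, and $t$ is a strictly decreasing affine function of the cosine similarity $\langle x,y\rangle$. Composing these three monotonicities yields that $K(\mathbf{x},\mathbf{y})$ is a strictly increasing function of $\langle x, y\rangle$, which is precisely the claim.

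The argument is almost entirely bookkeeping; the only mildly delicate step is verifying that $\phi_\sigma$, despite its somewhat ad hoc form, reduces to the clean closed form $\log((2-u)/u)$ so that the apparent nonlinearity washes out. Once this algebraic simplification is in hand, the rest is a one-line monotonicity check, so I expect no substantive obstacle.
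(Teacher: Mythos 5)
Your proof is correct and follows essentially the same route as the paper's: specialize to the unique trivial alignment, use the single-slice case of the Triple Kernel together with normalization to write $\varphi(x,y)=2(1-\langle x,y\rangle)$, change variables to $t$ and $u=e^{-t}$ to reach $K=u/(2-u)$, and conclude monotonicity by the chain rule. The only cosmetic difference is that the paper also explicitly evaluates $\sigma=\delta$ in the one-slice case, which you leave as a generic positive constant; this does not affect the argument.
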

\begin{proof}
	let $\textbf{x}=(x), \textbf{y}=(y)$, and cosine similarity of $x$ and $y$ is $cos=cos<x,y>$. we can get 
	$$\sigma  = \delta \sqrt {\frac{{M + N}}{2}} = \delta \sqrt {\frac{{1 + 1}}{2}} = \delta$$
	and we have $\varphi (a,b) = |a|^2+|b|^2 - 2cos<a,b> =2(1-cos<a,b>)$, thus:	
	\begin{align*}
		{\varphi _\sigma } &= \frac{1}{{2{\sigma ^2}}}\varphi\left(x_{\pi_{1}(i)}, y_{\pi_{2}(i)}\right) + \log \left(2 - {e^{ - \frac{\varphi\left(x_{\pi_{1}(i)}, y_{\pi_{2}(i)}\right)}{{2{\sigma ^2}}}}}\right) \\
		&= \frac{1}{{2\delta^2}}\varphi\left(x_{\pi_{1}(i)}, y_{\pi_{2}(i)}\right) + \log \left(2 - {e^{ - \frac{\varphi\left(x_{\pi_{1}(i)}, y_{\pi_{2}(i)}\right)}{{2\delta^2}}}}\right) \\
		&= \frac{1-cos<a,b>}{{\delta^2}} + \log \left(2 - {e^{ - \frac{1-cos<a,b>}{{\delta^2}}}}\right) 
	\end{align*}
	Letting $t = \frac{1-cos<a,b>}{{\delta^2}}$ and substituting the result of $\varphi_\sigma = t +\log (2-e^{-t})$ into Equation~\ref{eq_gak}, we obtain:
	\begin{align*}
		K(\mathbf{x}, \mathbf{y}) &=\sum_{\pi \in \mathcal{A}(\mathbf{x}, \mathbf{y})} \prod_{i=1}^{|\pi|} e^{-\phi_\sigma} \\
		& = e^{-\phi_\sigma} \\
		& = e^{-t + log(\frac{1}{2-e^{-t}})} \\
		& = \frac{e^{-t}}{2-e^{-t}}
	\end{align*}
	Letting $s=e^{-t}$, we can further obtain:
	\begin{align*}
		K(\mathbf{x}, \mathbf{y}) = \frac{s}{2-s}
	\end{align*}
	As $\cos\langle a, b \rangle \in [-1, 1]$, $s$ strictly increases with $\cos\langle a, b \rangle$ and $s \in [e^{-{\frac{2}{\deltav^2}}}, 1]$ when the hyperparameter $\delta$ is fixed. Derivative of $K(\mathbf{x}, \mathbf{y})$ can be obtained:
	\begin{align*}
		K(\mathbf{x}, \mathbf{y})' = \frac{2-s+s}{(2-s)^2} = \frac{2}{(2-s)^2} > 0
	\end{align*}
	Overall, when both $\textbf{x}$ and $\textbf{y}$ contain only one slice, GAK is monotonically increasing with the directly calculated cosine similarity.
\end{proof}

\section{Method Details}
\subsection{Architecture Details}
We adopt the manifold multimodal model architecture~\cite{liu2023improved, lin2023vila, chen2023minigptv2, bai2023qwen, huang2024one}, formulated as follows:

\noindent \textit{Visual Representation}. We first process $x_\text{img}$ subject to a visual representation backbone $V_\omega$ that outputs a sequence of features $p_\text{img} \in \mathbb{R}^{L \times h_{\text{vision}}}$ where $p_\text{img} = V_\omega(x_\text{img})$. As an example, $p_{\text{img}}$ might be the patch features output by a Vision Transformer. 

\noindent \textit{Vision-Language Projector}. Next, we map $p_\text{img}$ to a sequence of \textit{embeddings} $e_\text{img} \in \mathbb{R}^{L \times h_{\text{text}}}$ via a learned projector $F_\psi$, where $e_\text{img} = F_\psi(p_\text{img})$.

\noindent \textit{Language Model}. Finally, we concatenate the sequence $e_\text{img}$ with the text prompt embeddings $e_\text{prompt} = \text{embed}(u_\text{prompt})$, passing the result to the language model. Generally, we have the interleaved image-text input $x_\text{input}$ by concatting all the $e_\text{prompt}$ and $e_\text{img}$. The language model generates output text $u_\text{gen} = \text{LM}_\theta(x_\text{input})$.

    \noindent \textit{Retrieval Projector}. For discriminative tasks, we select the token $d_i$ from MLLM's hidden state and map it to $r_i$ via a learned projector $F_\varphi$.

In Implementation, we utilize CLIP ViT-L/14~\cite{radford2021learning} as the visual encoder, and Vicuna 1.5~\cite{vicuna} as the language model.
\subsection{Sequence Alignment}~\label{app_sa}
An alignment $\pi$ of length $|\pi| = p $ between two sequences $\textbf{x}$ and $\textbf{y}$ is a pair of increasing p-tuples $(\pi_1, \pi_2)$ such that
\begin{equation}
	\begin{array}{*{20}{c}}
		{1 = {\pi _1}(1) \le ... \le {\pi _1}(p) = n,}\\
		{1 = {\pi _2}(1) \le ... \le {\pi _2}(p) = m,}
	\end{array}
\end{equation}
We write $\mathcal{A}(\textbf{x}, \textbf{y})$ for the set of all possible alignments between $\textbf{x}$ and $\textbf{y}$. Intuitively, an alignment $\pi$ between $\textbf{x}$ and $\textbf{y}$ describes a way to associate each element of a sequence $\textbf{x}$ to one or possibly more elements in $\textbf{y}$, and vice versa. Such alignments can be conveniently represented by paths in the $n \times m$ grid displayed in the left of Figure~\ref{fig_arch}. 

with unitary increments and no simultaneous repetitions, that is $\forall 1 \le i \le p - 1$,
\begin{equation}
	\begin{array}{r}
		\pi_{1}(i+1) \leq \pi_{1}(i)+1, \quad \pi_{2}(j+1) \leq \pi_{2}(j)+1, \\
		\left(\pi_{1}(i+1)-\pi_{1}(i)\right)+\left(\pi_{2}(i+1)-\pi_{2}(i)\right) \geq 1 .
	\end{array}
\end{equation}
The score on a path is defined as:
\begin{equation}
	S(\pi ) = \sum\limits_{i = 1}^{|\pi |} {\varphi ({x_{{\pi _1}(i)}},{y_{{\pi _2}(i)}})}
\end{equation}
\newpage
\section{Experimental Details}\label{exp_detail}
\subsection{Datasets}\label{app_data}
\paragraph{Training Data.} 

Our vision-language task datasets are a subset of VILA~\cite{lin2023vila}, including MMC4~\cite{zhu2024multimodal}, COYO~\cite{byeon2022coyo}, LLaVA-1.5 SFT dataset~\cite{liu2023improved}.

We use a prompt template formatted as ({system-message} is a system prompt from Vicuna, and the following messages all have the same meaning.):
\begin{verbatim}
	{system-message}. USER: <image>\n {question}. ASSISTANT: {answer}.
\end{verbatim}
For interleaved vision-language datasets, the template is formatted as:
\begin{verbatim}
	{system-message}. USER: {interleaving question}. ASSISTANT: {answer}.
\end{verbatim}

\paragraph{Training Strategy.}

To jointly train the discriminative loss and generative loss, we calculate the loss as follows. Since the last token of an image is often a padding token, we take all 576 hidden state tokens before the LM head for images and apply average pooling to obtain a single token. For text, we directly take the  last toke in the hidden state of MLLM.

During training, We calculate the discriminative loss using the last token from either the end of the text or the image in the MLLM's hidden state. Notably, in an interleaved input sequence with multiple texts or images, we randomly select multiple last tokens from the same sequence to more efficiently utilize the samples.

\paragraph{Evaluation Data.}
For generative tasks, we first evaluate on a wide range of question-answering tasks and some MLLM-oriented comprehension benchmarks, including VQA-v2~\cite{goyal2017making}, GQA~\cite{hudson2019gqa}, VizWiz~\cite{gurari2018vizwiz}, ScienceQA-IMG~\cite{lu2022learn}, TextVQA~\cite{singh2019towards}, POPE~\cite{li2023evaluating}, MME~\cite{yin2023survey}, MMBench~\cite{liu2023mmbench}, LLaVA-Bench (In-the-Wild)~\cite{liu2024visual}, MM-Vet~\cite{yu2023mmvet} and. The split of test sets and the evaluation metrics are aligned with those described in VILA\cite{lin2023vila} and LLaVA~\cite{liu2023improved}.

To test the generative ability in interleaving tasks, we use DEMON~\cite{li2023fine}, a comprehensive benchmark that demonstrative instruction following ability, including a wide variety of multi-modal datasets from different fields and scenarios.

For generation tasks, our evaluation encompasses MSCOCO ~\cite{karpathy2015deep} for image-text retrieval, Visual Storytelling (VIST)~\cite{huang2016visual} for interleaved retrieval and Winoground~\cite{thrush_and_ross2022winoground} for fine-grained retrieval. 

\subsection{Training}\label{app_train}
We train the parameters for both the LLM and the MLP for embedding the MLLM's hidden state, initializing from VILA~\cite{lin2023vila}. To enhance efficiency for the LLM, we employ LoRA tuning~\cite{hu2021lora} on the $W_q$ and $W_v$ matrices using low-rank adaptation. In our implementation, we set the rank $r=128$ and $\alpha=256$. We utilize the AdamW optimizer~\cite{loshchilov2017decoupled} in conjunction with a cosine learning rate scheduler. The hyperparameters for the AdamW optimizer are configured with a warm-up ratio of 0.03 and a maximum learning rate of $1e-4$. Training is conducted on 8 x A800 GPUs for approximately 12 hours.

\subsection{Introduction Experiment Details}\label{app_intro}
\textbf{(a) WebQA.} The original WebQA contains two types of questions: \textit{"Qcate": "text"} (open-ended questions) and \textit{"Qcate": "YesNo"} (binary judgment questions). For ease of evaluation, we only used the second type.
We selected 500 samples of the "YesNo" question type from WebQA~\cite{WebQA21}, each containing one relevant image-text pair and five unrelated image-text pairs.
Since the original data provides responses in declarative sentences, we modified the answers of these samples to be either "yes" or "no" by prompting with \textit{"please answer the question in Yes or No."}
\newpage
We transformed this dataset into a question-answering format. Each question takes the following form (Due to display problems, we have performed line breaks, the same below.):
\begin{verbatim}
	{system-message}. USER: {qustion}\n{image-text pairs}.\n
	please answer the question in Yes or No.\n
	ASSISTANT: {answer}.
\end{verbatim}

Here is a case for one sample in Figure~\ref{fig:app09}:
\begin{figure}[H]
	\centering  
	\vspace{-4mm}
	\includegraphics[width=0.99\textwidth]{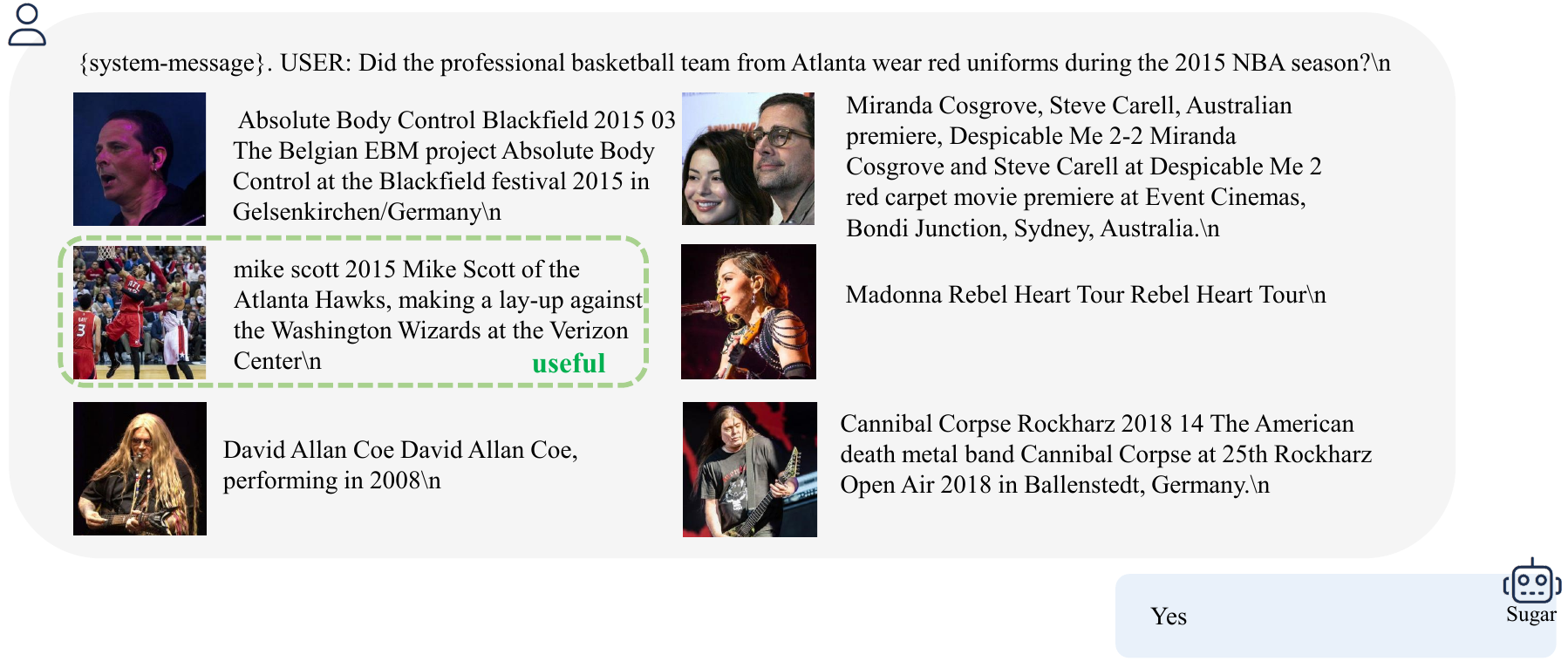}
	\vspace{-1mm}
	\caption{A Case for WebQA. The index for the useful pair is three.}
	\label{fig:app09}  
	\vspace{-3mm}
\end{figure}

In WebQA, the accuracy roughly forms a "U" shape curve when the relevant image-text pair for a question appears at different positions. While \shortname{} also shows similar trends, it tends to be more stable overall.
Specific numerical results can be found in Table~\ref{tab_webqa_data}.
\begin{table}[H]
	\centering
	\begin{tabular}{ccccccc}
		\hline
		Index & 1 & 2 & 3 & 4 & 5 & 6 \\ \hline
		VILA & 50.0& 49.0& 47.4& 44.4& 44.4& 49.0 \\
		\rowcolor{black!4}
		\textbf{\shortname} & 63.8& 60.4& 59.6& 59.2& 60.4& 61.0 \\ \hline
	\end{tabular}
	\vspace{0.2cm}
	\caption{Specific accuracy (\%) values displayed on WebQA. The index indicates the position of the useful image-text pair, denoting which position it occupies in the sequence.}
	\label{tab_webqa_data}
\end{table}
\begin{table*}[!h]
	\centering
	\small
	\scalebox{0.88}{
		\begin{tabular}{l:c:cccccccccc}
			\hline
			& \multirow{2}{*}{\shortstack{Image\\Size}} &  \multirow{2}{*}{\faCompass}  &  \multirow{2}{*}{\faSearch} & \multirow{2}{*}{\faSync} & \multirow{2}{*}{\faSortNumericUp} & \multirow{2}{*}{\faMapPin} & \multirow{2}{*}{\faPalette}  & \multirow{2}{*}{\faCogs}  & \multirow{2}{*}{\faFont} & \multirow{2}{*}{\faCamera}  & \multirow{2}{*}{\shortstack{Average}}
			\\
			\\
			\Xhline{1pt} 
			
			OpenAI ViT-L-14 \citep{radford2021learning} & 224$^2$ & 13.3 & 13.3 & 20.0 & 20.0 & 13.3 & 53.3 & 20.0 & 6.7 & 13.3 & 19.3 \\
			OpenAI ViT-L-14 \citep{radford2021learning}& 336$^2$ & 0.0 & 20.0 & 40.0 & 20.0 & 6.7 & 20.0 & 33.3 & 6.7 & 33.3 &  20.0 \\
			SigLIP ViT-SO-14 \citep{zhai2023sigmoid}& 224$^2$ & 26.7 & 20.0 & 53.3 & 40.0 & 20.0 & \textbf{66.7} & 40.0 & \underline{20.0} & \textbf{53.3} & 37.8 \\
			SigLIP ViT-SO-14 \citep{zhai2023sigmoid}& 384$^2$ &20.0 & 26.7 & 60.0 & 33.3 & 13.3 & \textbf{66.7} & 33.3 & 26.7 & \textbf{53.3} & 37.0 \\
			DFN ViT-H-14~\citep{fang2023data}& 224$^2$ & 20.0 & 26.7 & 73.3 & 26.7 & 26.7 & \textbf{66.7} & 46.7 & 13.3 & \textbf{53.3} & 39.3 \\
			
			DFN ViT-H-14~\citep{fang2023data}& 378$^2$ & 13.3 & 20.0 & 53.3 & 33.3 & 26.7 &\textbf{66.7} & 40.0 & \underline{20.0} & 40.0 & 34.8 \\
			
			MetaCLIP ViT-L-14 \citep{xu2023demystifying}& 224$^2$ & 13.3 & 6.7 & \textbf{66.7} & 6.7 & 33.3 & 46.7 & 20.0 & 6.7 & 13.3 & 23.7 \\
			MetaCLIP ViT-H-14 \citep{xu2023demystifying}& 224$^2$ & 6.7 & 13.3 & 60.0 & 13.3 & 6.7 & 53.3 & 26.7 & 13.3 & 33.3 & 25.2 \\
			EVA01 ViT-g-14 \citep{sun2023eva}& 224$^2$ & 6.7 & 26.7 & 40.0 & 6.7 & 13.3 & \textbf{66.7} & 13.3 & 13.3 & 20.0 & 23.0 \\
			EVA02 ViT-bigE-14+ \citep{sun2023eva}& 224$^2$ & 13.3 & 20.0 & \textbf{66.7} & 26.7 & 26.7 & \textbf{66.7} & 26.7 & 20.0 & 33.3 & 33.3  \\
			
			VILA-7B~\cite{lin2023vila}$^\dagger$  & 336$^2$ & \underline{36.7} &\underline{46.7}& 53.3 & \underline{43.3}	&\underline{50.0}	& 60.0	& \underline{50.0}& \underline{46.7} & \underline{50.0} & \underline{48.5} \\
			
			\midrule
			\rowcolor{black!4}
			\shortname{}$^\dagger$ & 336$^2$ & \textbf{56.7} &  \textbf{50.0}   & \underline{63.3}           &  \textbf{50.0}         &  \textbf{60.0}  &  \textbf{66.7}           & \textbf{56.7}    & \textbf{63.3}& \textbf{53.3}& \textbf{57.8}  \\ \hline
		\end{tabular}
	}
	\caption{Performance Comparison of VILA and Various CLIP-Based Models on Different Visual Patterns in MMVP-VLM Benchmark. For most of the visual patterns, all CLIP-based methods show struggle, as evident from the scores. \shortname{} achieves state-of-the-art performance on the majority of tasks, demonstrating its powerful discriminative ability.
		We use symbols for visual patterns due to space limit:
		\textbf{\faCompass}: Orientation and Direction, \textbf{\faSearch}: Presence of Specific Features, \textbf{\faSync}: State and Condition, \textbf{\faSortNumericUp}: Quantity and Count, \textbf{\faMapPin}: Positional and Relational Context, \textbf{\faPalette}: Color and Appearance, \textbf{\faCogs}: Structural and Physical Characteristics, \textbf{\faFont}: Texts, \textbf{\faCamera}: Viewpoint and Perspective.
		$^\dagger$ indicates that we use question-answering as the test method, instead of dot product.
	}
	\label{tabl_mmvp}
\end{table*}

\textbf{(b) MMVP-VLM Benchmark.}
MMVP-VLM~\cite{tong2024eyes} contains 30 carefully annotated images in each dimension of capability, with pairs of images being highly similar to each other (as indicated by their high similarity scores in CLIP). To evaluate the discriminative ability of generative models on these finely nuanced images, we transformed this dataset into a question-answering format. Each question takes the following form:
\begin{verbatim}
{system-message}. USER: First Image:<image>\nSecond Image:<image>\n 
 
which choice meets the first image:\n 
    
A.{data["Statement"]}\nB.{data["Statement2"]}\n.please answer in A or B
	
ASSISTANT: {answer}.
\end{verbatim}
Among them, both Statement 1 and Statement 2, as well as Image 1 and Image 2, are highly similar, with only subtle differences. Furthermore, there is a corresponding relationship between Image 1 and Statement 1, and between Image 2 and Statement 2. We employed a random seed to ensure that the correct answer is equally distributed between option A and option B. The specific values for the experiment are provided in Table~\ref{tabl_mmvp}.

\subsection{Retrieval-Augmented Generation.}
For performing retrieval-augmented generation (RAG), we selected two tasks, namely VizWiz and SQA$^\text{I}$, as they provide held-in data not seen during model training. We did not use VQA$^\text{v2}$, GQA, and VQA$^\text{T}$ because their held-in data is a subset of the widely-used LLaVA-1.5 SFT. Benchmarks like POPE, MMB, and others lack held-in data. Therefore, we focused on VizWiz and SQA$^\text{I}$ for our experiments. We utilized a mixed set comprising the widely-used LLaVA-1.5 SFT subset and the held-in dataset of the two tasks as the knowledge base and employed different retrieval modules to retrieve relevant knowledge for the MLLM. Similar to common practice, we average the similarity scores for CLIP (We choose CLIP ViT-L/14@336px). For BLIP-2, we compute the similarity using its multimodal token's CLS token. Figure~\ref{fig:app_rag} shows some specific retrieval results on test data, demonstrating that \shortname{} can better integrate information from both images and text, retrieving more similar data as external knowledge.
\begin{figure}[H]
	\centering
	\vspace{-1mm}  
	\includegraphics[width=0.99\textwidth]{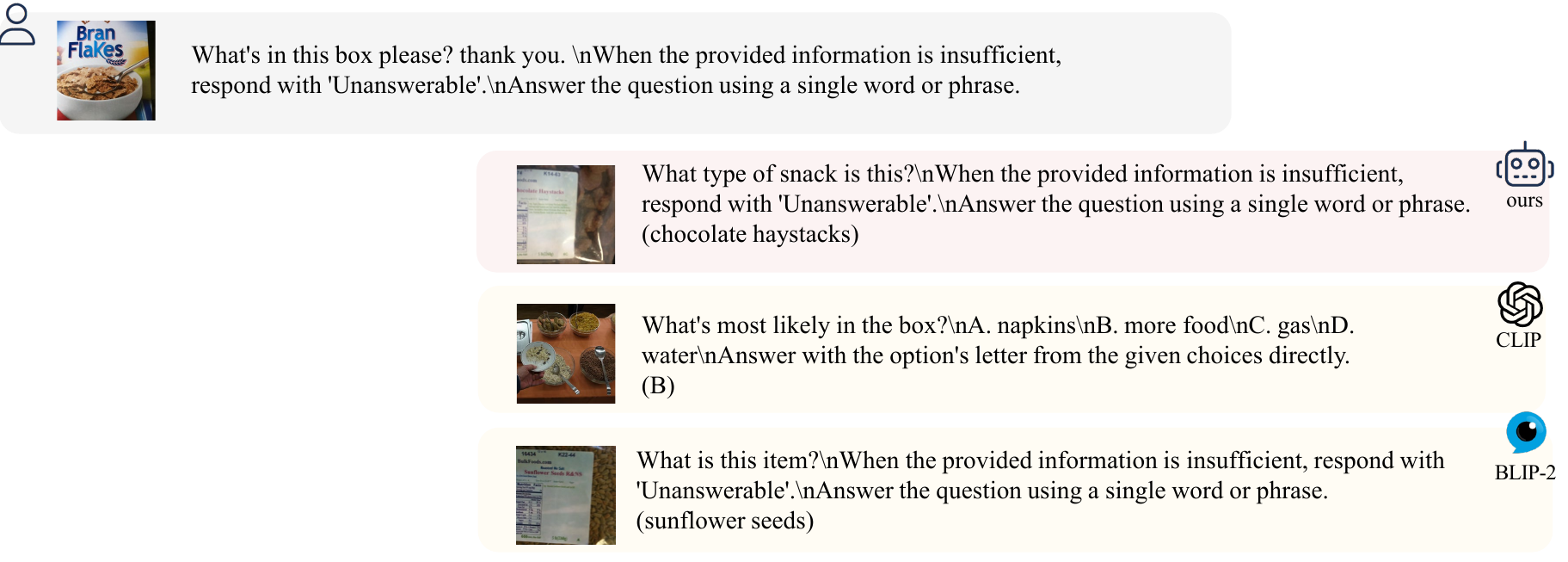}
	\vspace{-1mm}
	\caption{Selected examples from do retrieval-augmented generation.\shortname{} can retrieve more useful knowledge compared with CLIP and BLIP-2. Inside the parentheses are the answers, note that the When retrieving, we will only retrieve the questions, not the answers, which are shown here for convenience only.}
	\label{fig:app_rag01}  
	\vspace{-3mm}
\end{figure}
\newpage
\begin{figure}[H]
	\centering
	\vspace{-1mm}  
	\includegraphics[width=0.99\textwidth]{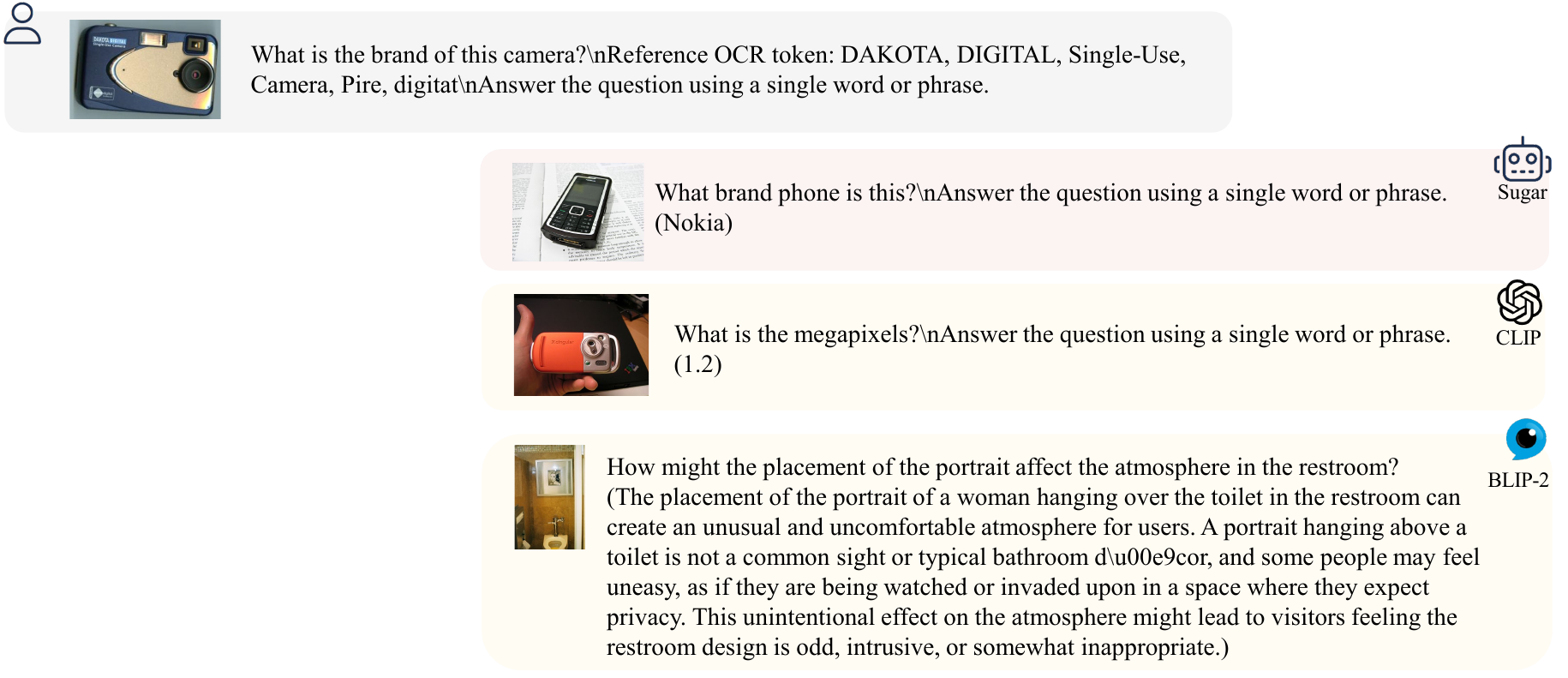}
	\includegraphics[width=0.99\textwidth]{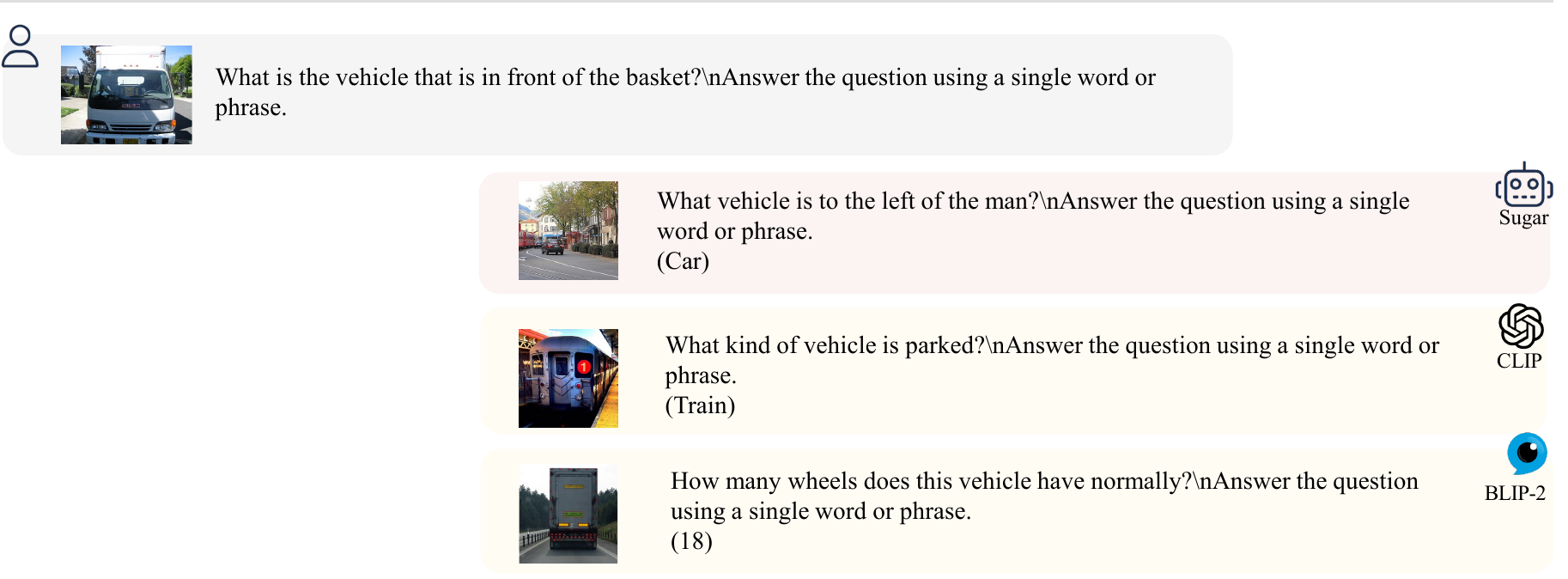}
	\includegraphics[width=0.99\textwidth]{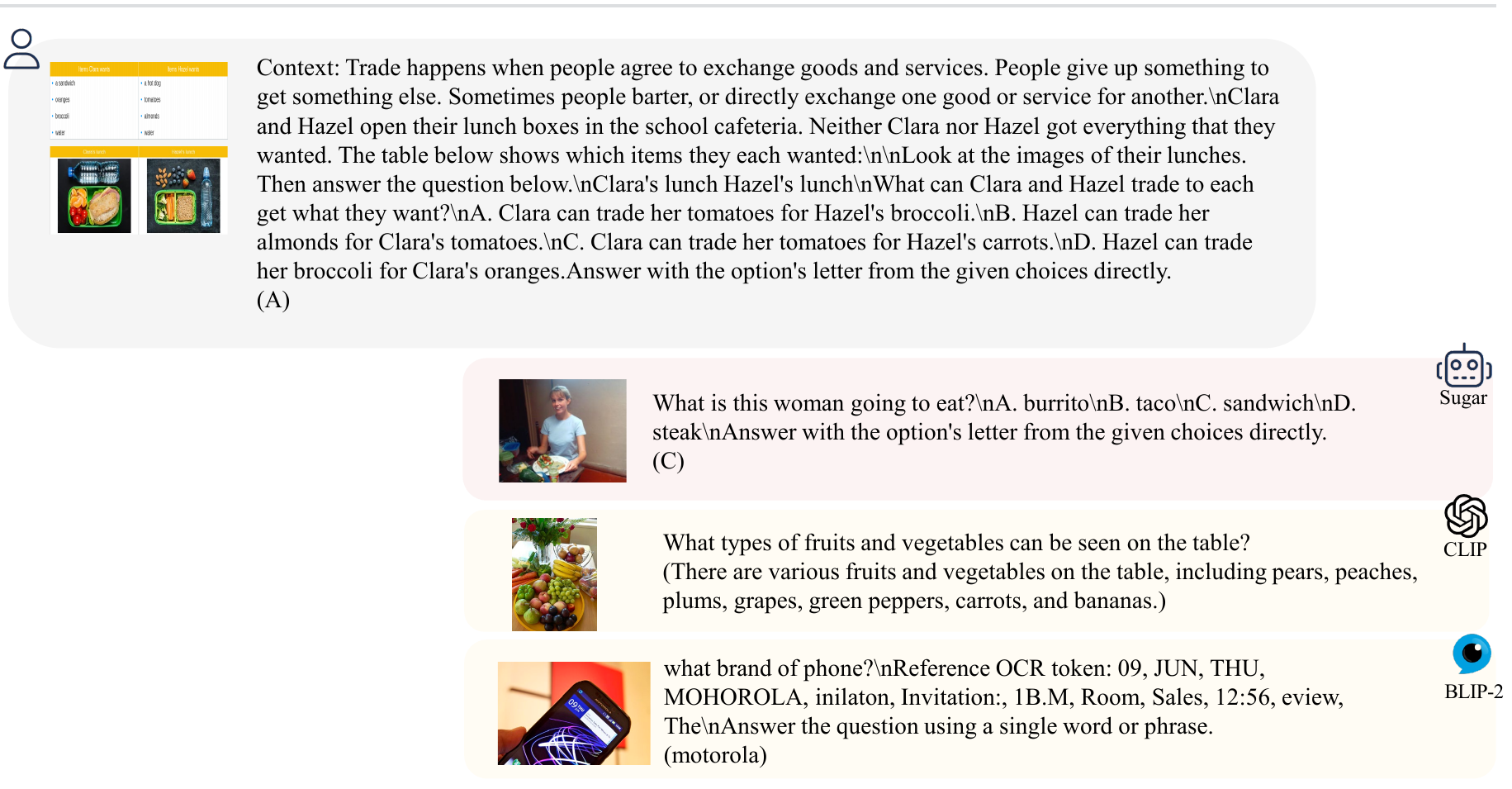}
	\vspace{-1mm}
	\caption{Selected examples from do retrieval-augmented generation (continued for Figure~\ref{fig:app_rag01}).}
	\label{fig:app_rag}  
	\vspace{-3mm}
\end{figure}
\section{More Results}
\subsection{Details of Retrieval}~\label{app_retrieval}
\textbf{Image-text Retrieval.} 
FROMAGe~\cite{koh2023grounding} was evaluated on the 5K validation set of MSCOCO 2017. Due to the split method confusion in FROMAGe, we report our image-text retrieval results on MSCOCO val2014's 5K val set following UniIR~\cite{wei2023uniir} and the Karpathy split~\cite{karpathy2015deep}. What's more we then utilize FAISS~\cite{johnson2019billion}, a powerful library for efficient similarity searches in dense vector spaces, to index and retrieve candidates. Therefore, the results may exhibit slight differences when compared under identical settings. The results in Table~\ref{table:exp_retrieval}(a) are provided for reference only.

\textbf{Interleaved Retrieval.} 
We conduct evaluations across several experimental configurations, following the same setup as FROMAGe~\cite{koh2023grounding}. The settings are as follows:

1. Retrieval of the last image given the descriptions of the preceding 5 images. This evaluates models' ability to condition on temporally dependent language.

2. Retrieval of the last image given the descriptions of the preceding 5 images and the 4 preceding images. This assesses models' capability to process interleaved image-and-text context.

\textbf{Fine-grained Retrieval.}
Winoground~\cite{thrush_and_ross2022winoground} is designed to evaluate the ability of vision and language models to perform vision-linguistic compositional reasoning. The task involves matching two images with two captions, where both captions contain an identical set of words/morphemes arranged in different orders. This dataset, meticulously hand-curated by expert annotators, includes a rich set of fine-grained tags to facilitate detailed performance analysis.

\subsection{Quality Results}\label{app_quality}
To analyze \shortname's emergent behaviors and observed weaknesses, we present additional qualitative samples that were not included in the main paper due to space constraints. Please note that for brevity, we have omitted the system prompts and the line breaks after the images for all the quality examples.

We hope these additional results and observations showcase the potential of \shortname{} in various application areas. In future work, it is important to investigate these emergent behaviors more thoroughly and to understand the underlying mechanisms that enable \shortname{} to demonstrate such generalization abilities. This will pave the way towards building better MLLMs, including enhancing robustness, reducing biases, and improving the alignment and scope of the learned vision-language representations.

\textbf{World Knowledge}: We observe that \shortname{} can leverage the world knowledge~\cite{ge2024worldgpt} embedded within the LLM to enhance performance on multimodal tasks. For example, as shown in Figure~\ref{fig:select_exp}, the model understands that during Halloween, people typically dress up in various ways to portray scary, funny, or creative characters, such as ghosts and skeletons.

\textbf{Retrieval the Same Sequence at Different Place}: One interesting emergent behavior of \shortname{} is its ability to retrieve sequences from different positions within the input interleaved sequence, demonstrating flexibility and high sample efficiency, as shown in Figure~\ref{fig:select_exp}. Unlike CLIP, which requires encoding each sample separately, \shortname{} can encode sequences of varying lengths for the same multi-modal document in a single forward pass.

What's more, \shortname{} is capable of both retrieval and generation tasks. Below in Figure~\ref{fig:app_vist}  are some examples from the VIST dataset.
\begin{figure}[H]
	\centering  
	\vspace{-4mm}
	\includegraphics[width=0.99\textwidth]{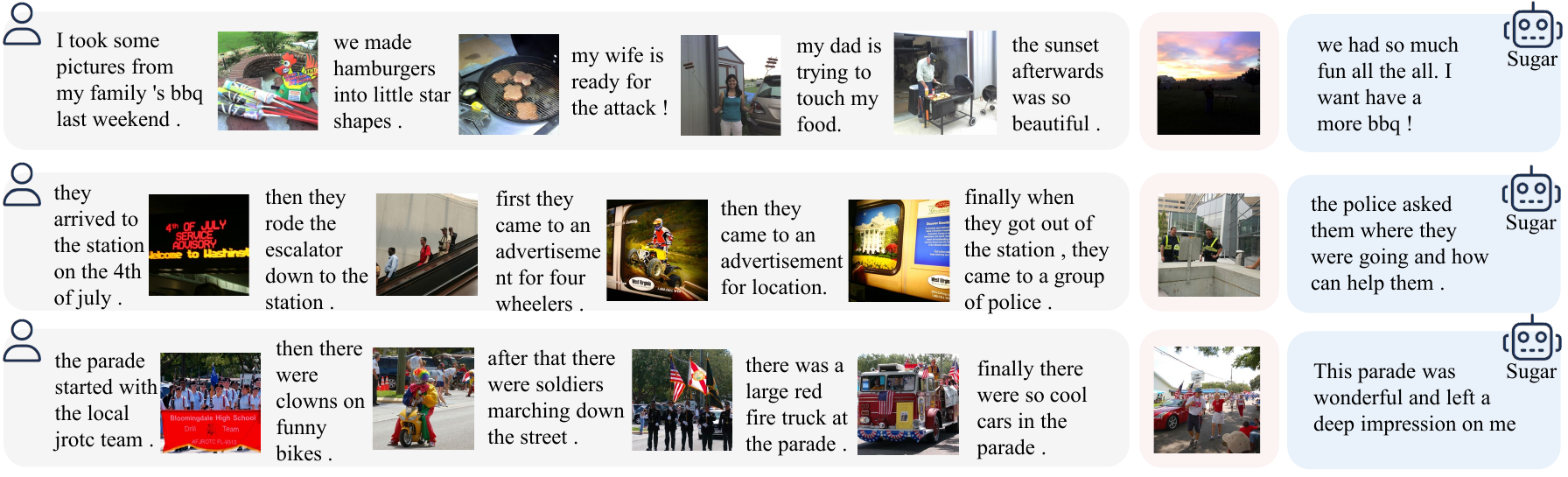}
	\vspace{-1mm}
	\caption{
		Selected examples for various image-text tasks. The \colorbox[HTML]{FCF3F3}{pink background} indicates retrieval results, while the \colorbox[HTML]{D4E5F7}{blue background} indicates generated results.
	}
	\label{fig:app_vist}  
	\vspace{-3mm}
\end{figure}

\newpage
\textbf{Fine-grained Image Discrimination}: As shown in Figure~\ref{fig:app_fine_dis}, \shortname{} excels at accurately discerning subtle differences between images and identifying detailed objects and their attributes. VILA, on the other hand, tends to describe the content of the images without pinpointing the precise differences between them. In contrast, \shortname{} provides more concise and direct answers.
\begin{figure}[H]
	\centering  
	\vspace{-4mm}
	\includegraphics[width=0.99\textwidth]{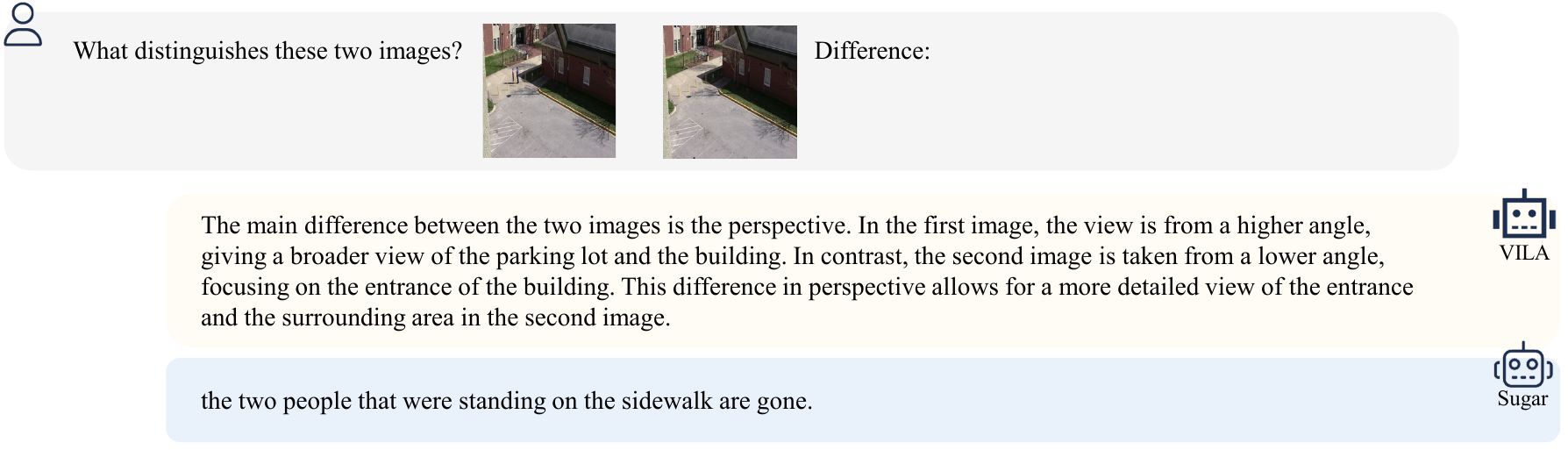}
	\includegraphics[width=0.99\textwidth]{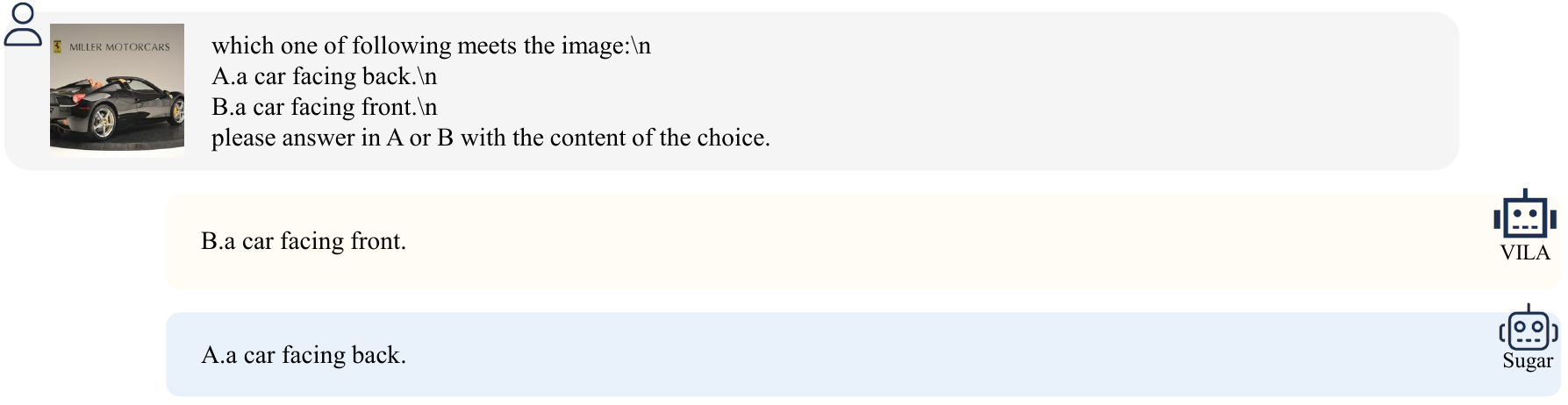}
	\includegraphics[width=0.99\textwidth]{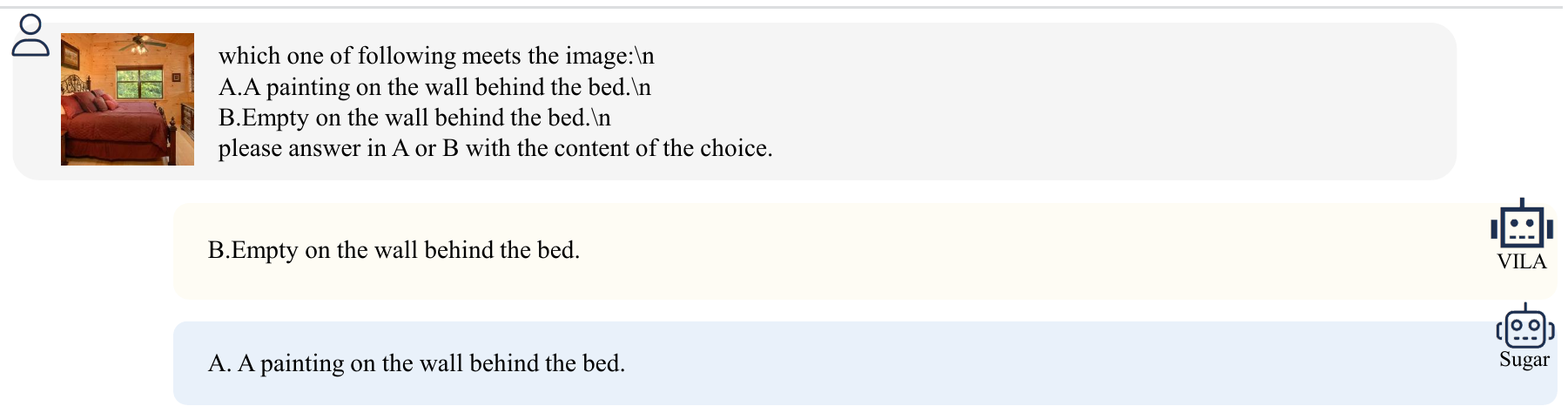}
	\includegraphics[width=0.99\textwidth]{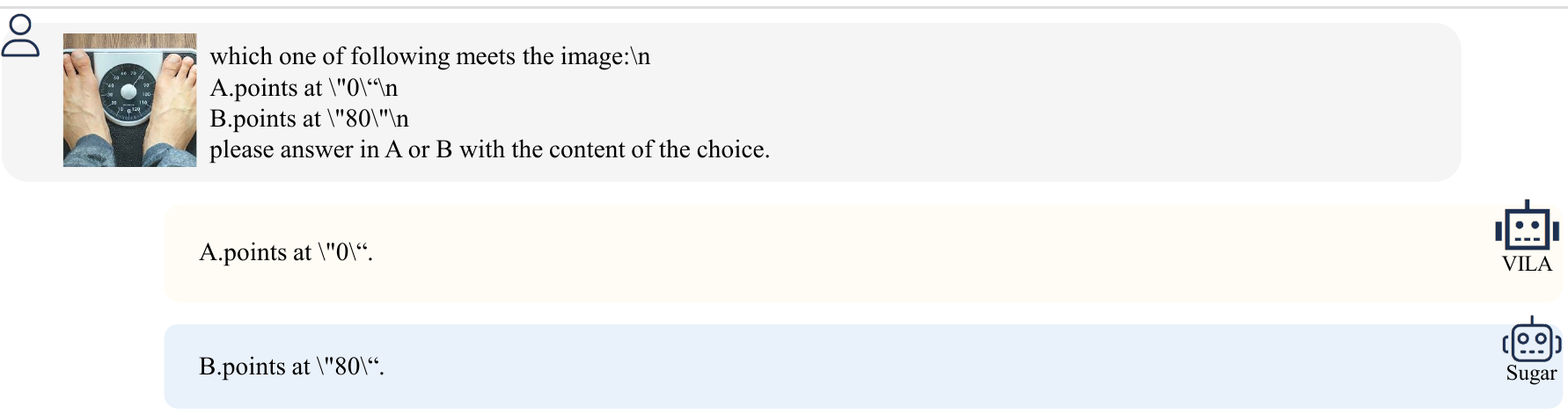}
	\includegraphics[width=0.99\textwidth]{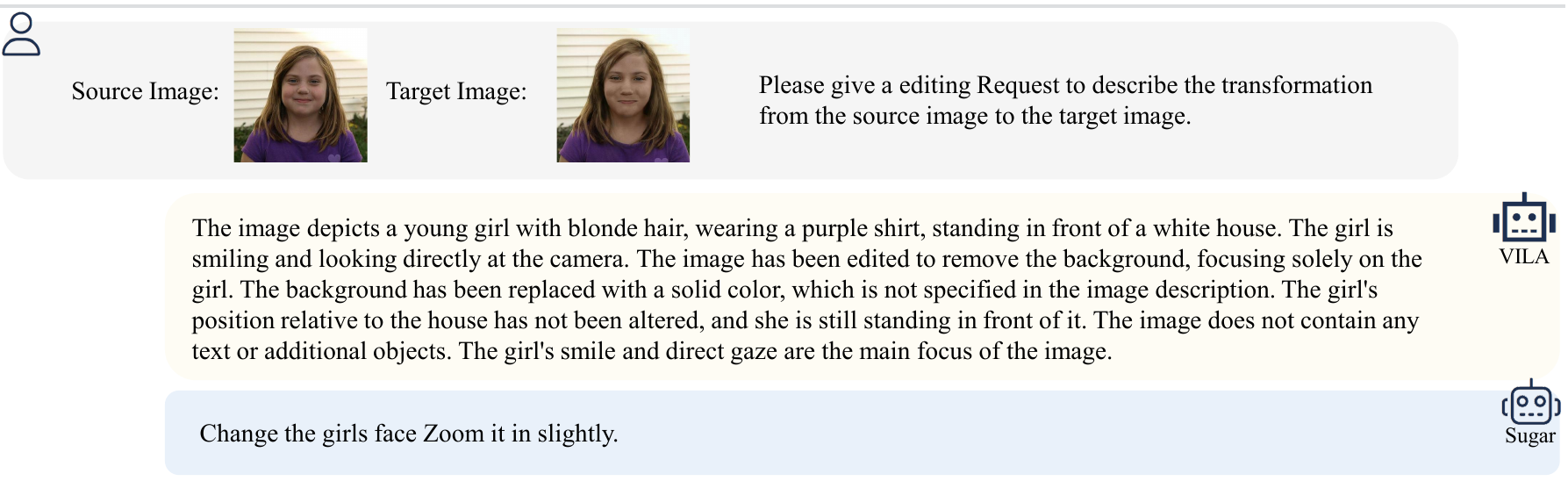}
	\vspace{-1mm}
	\caption{Selected examples. \shortname{} excels at accurately discerning subtle differences between images and identifying detailed objects and their attributes.}
	\label{fig:app_fine_dis}  
	\vspace{-3mm}
\end{figure}
\newpage
\textbf{Style Following}: \shortname{} exhibits a certain degree of in-context style following capability. As shown in Figure~\ref{fig:app0_style}, with the aid of external knowledge, \shortname{} partially adopts the style of retrieved results, resulting in more accurate and detailed answers compared to scenarios without retrieval augmentation.
\begin{figure}[H]
	\centering  
	\vspace{-4mm}
	\includegraphics[width=0.99\textwidth]{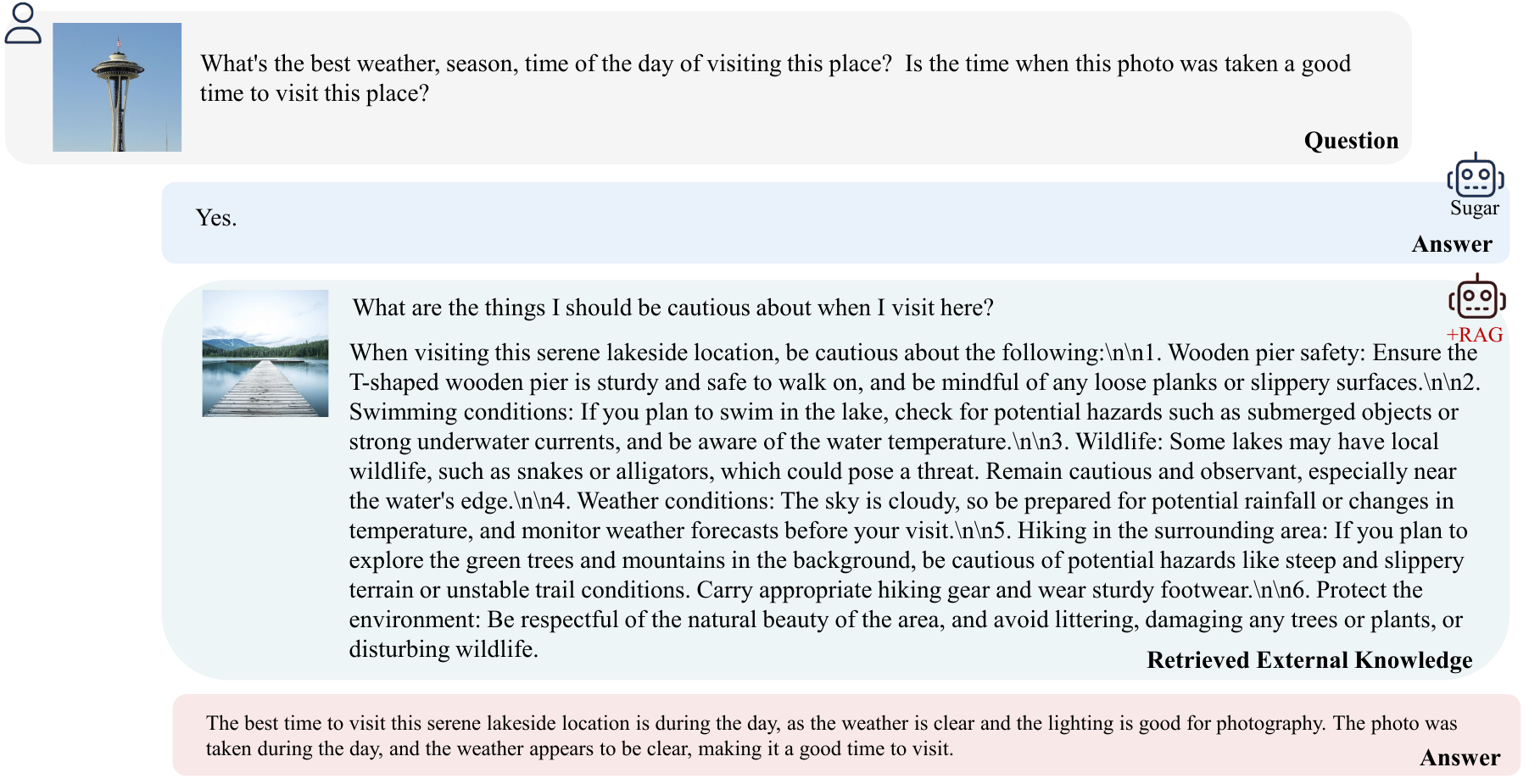}
	\includegraphics[width=0.99\textwidth]{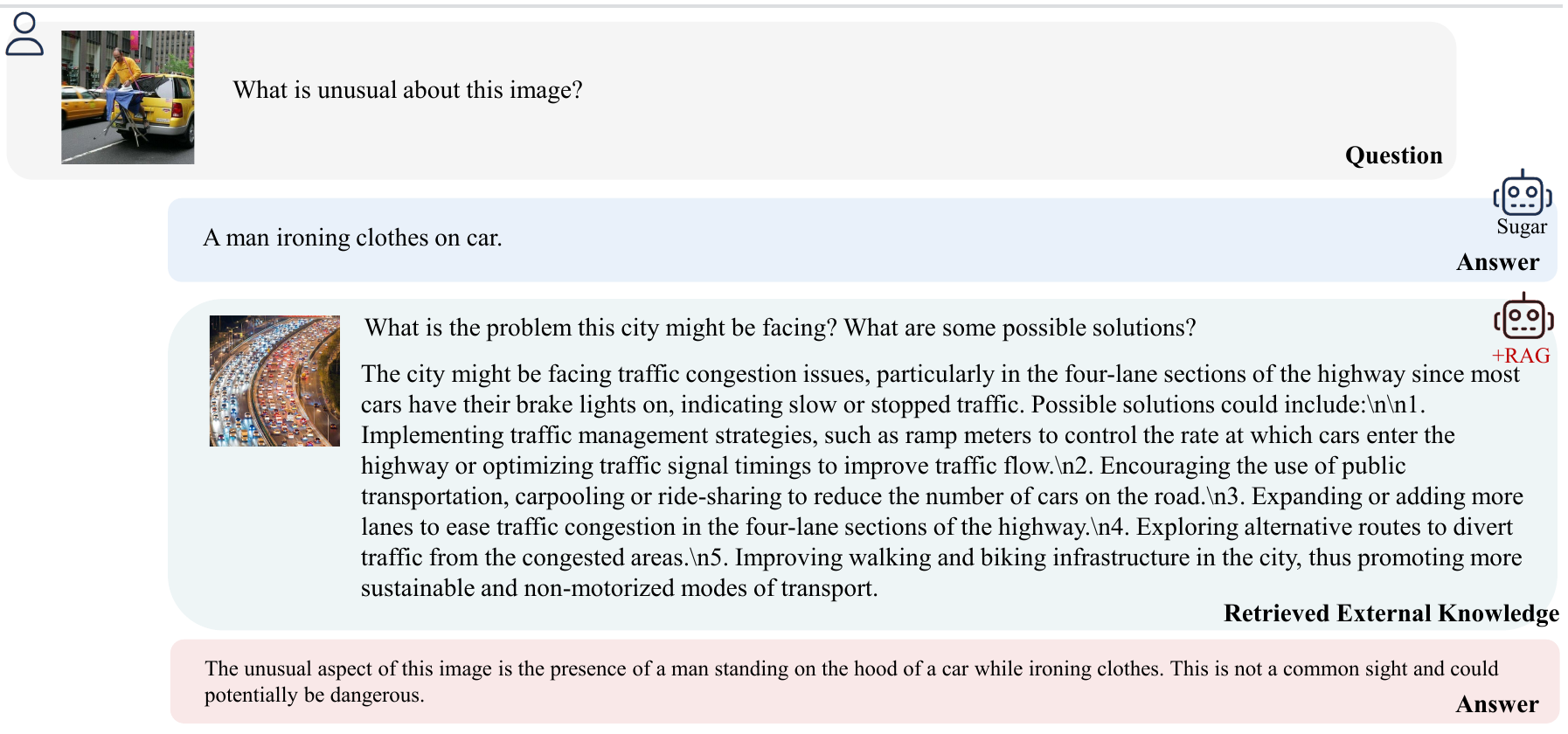}
	\vspace{-1mm}
	\caption{Selected examples from LLaVA-Bench(In-the-wild). Using external knowledge, \shortname{} partially follows the style of retrieved results, providing more accurate and detailed answers compared to not using retrieval augmentation.}
	\label{fig:app0_style}  
	\vspace{-3mm}
\end{figure}

\textbf{Interleaved Comprehension}: As demonstrated by results from DEMON~\cite{li2023fine}, \shortname{} exhibits superior interleaved comprehension capabilities compared to VILA, particularly in tasks requiring fine-grained analysis and an understanding of global context. For instance, in the third example of Figure~\ref{fig:app_inter_comp_1}, VILA confuses character names, whereas \shortname{} maintains narrative coherence while adhering to the style of the preceding text. Similarly, in the third example of Figure~\ref{fig:app_inter_comp_2}, VILA provides an irrelevant response, while \shortname{} delivers a more contextually appropriate answer. Additionally, Figure~\ref{fig:app_inter_comp_3} demonstrates \shortname's ability to effectively capture global information, identifying the relevant images and text within the sequence to provide accurate responses.
\newpage
\begin{figure}[h]
	\centering  
	\vspace{-4mm}
	\includegraphics[width=0.99\textwidth]{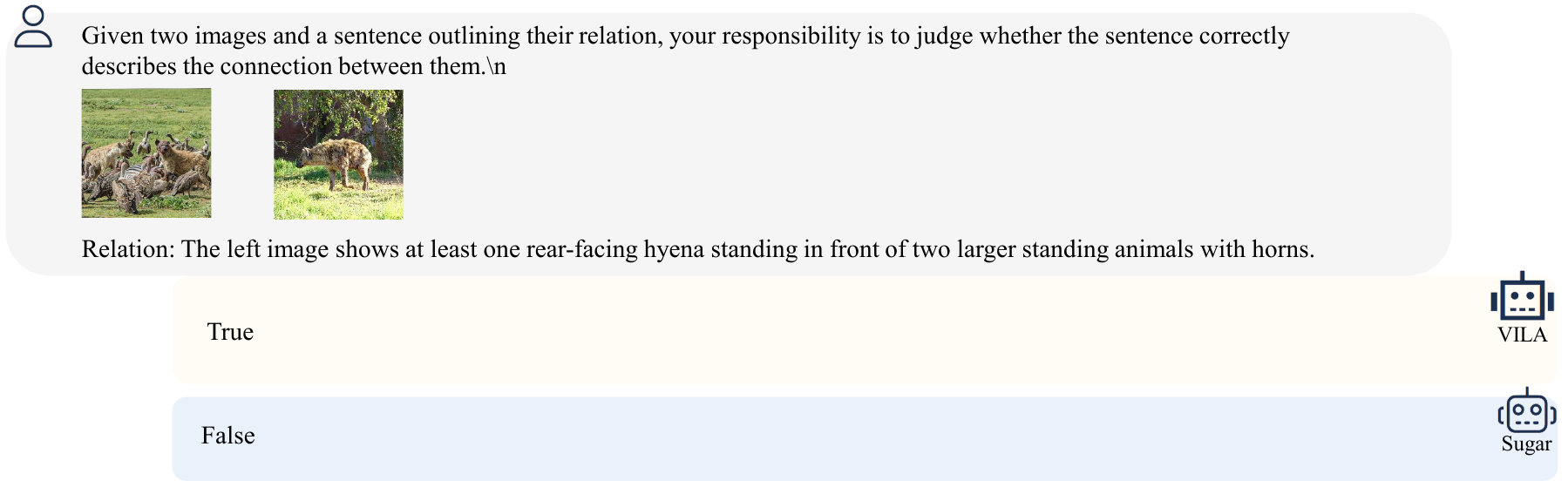}
	\includegraphics[width=0.99\textwidth]{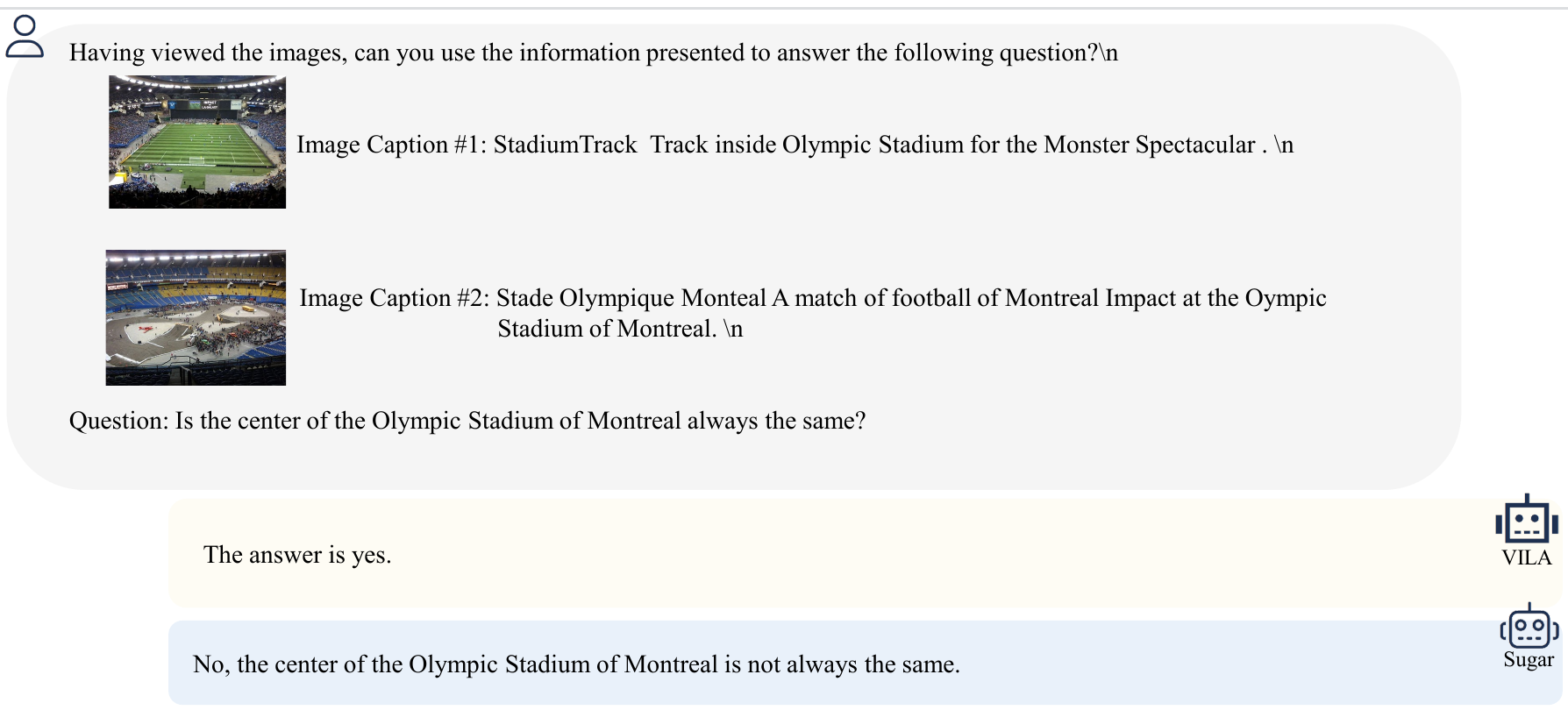}
	\includegraphics[width=0.99\textwidth]{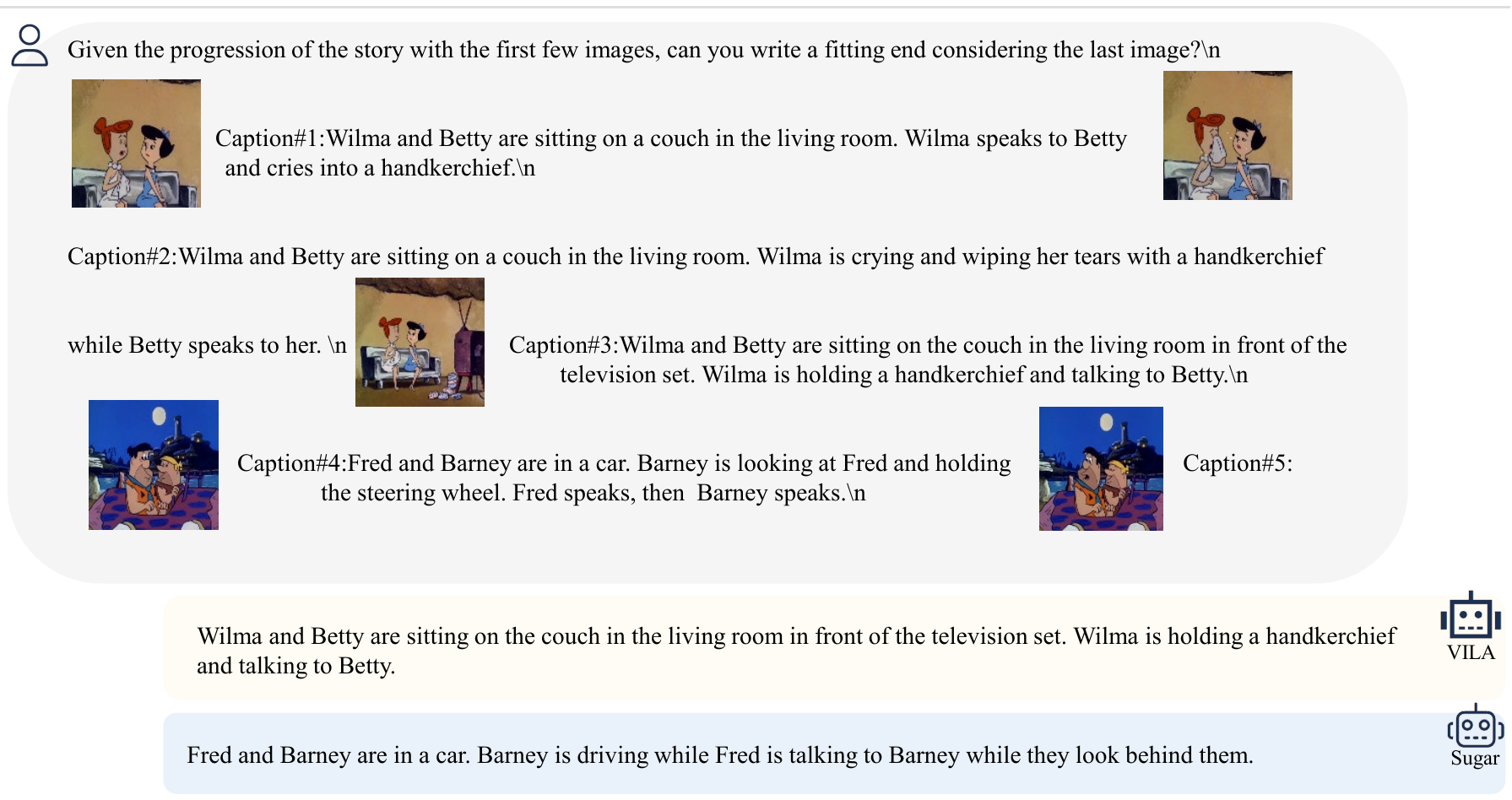}
	\vspace{-1mm}
	\caption{Selected examples for Interleaved Comprehension.}
	\label{fig:app_inter_comp_1}  
	\vspace{-3mm}
\end{figure}
\newpage
\begin{figure}[h]
	\centering  
	\vspace{-4mm}
	\includegraphics[width=0.99\textwidth]{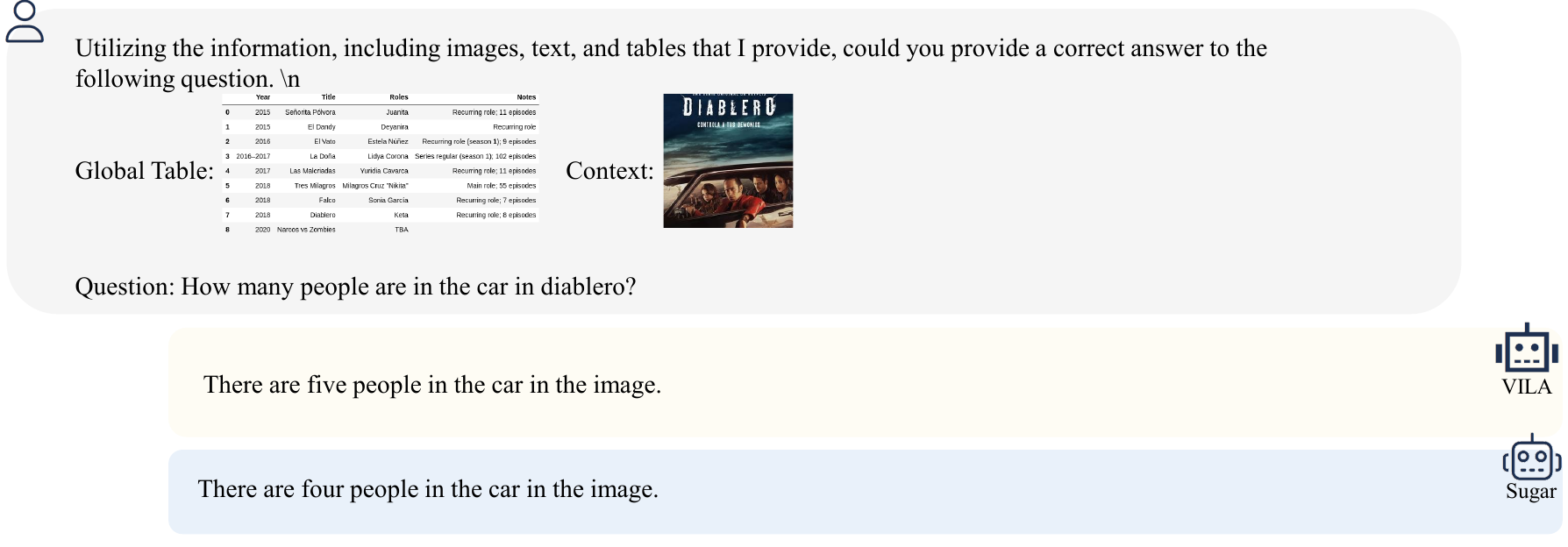}
	\includegraphics[width=0.99\textwidth]{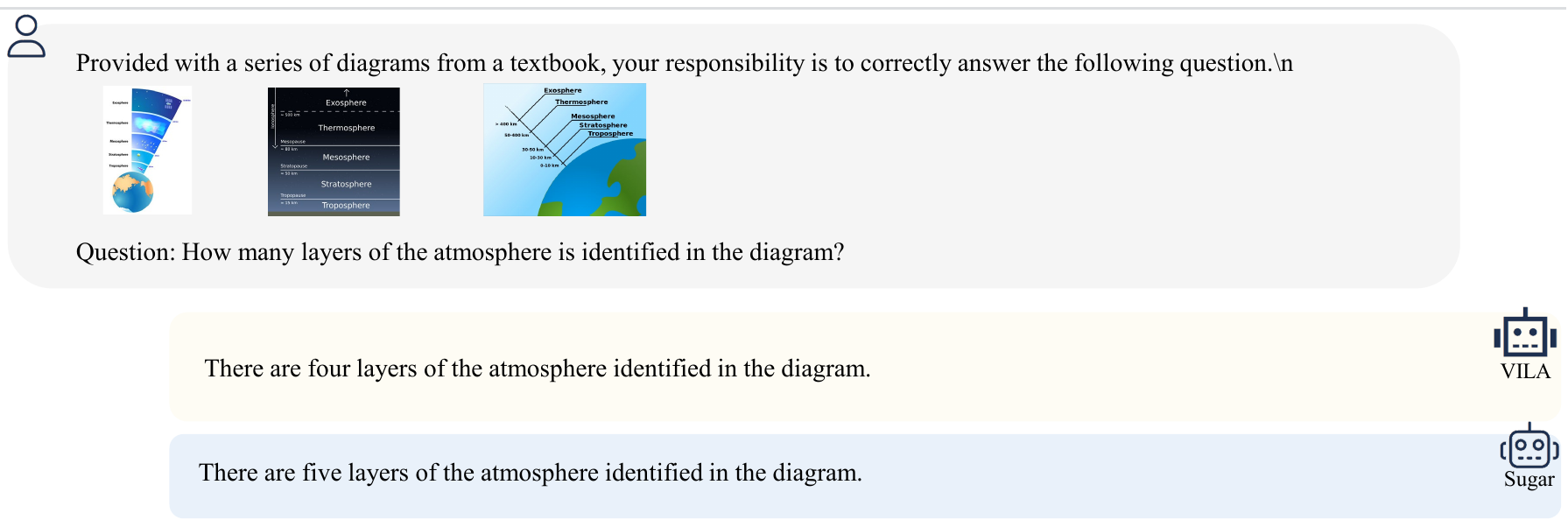}
	\includegraphics[width=0.99\textwidth]{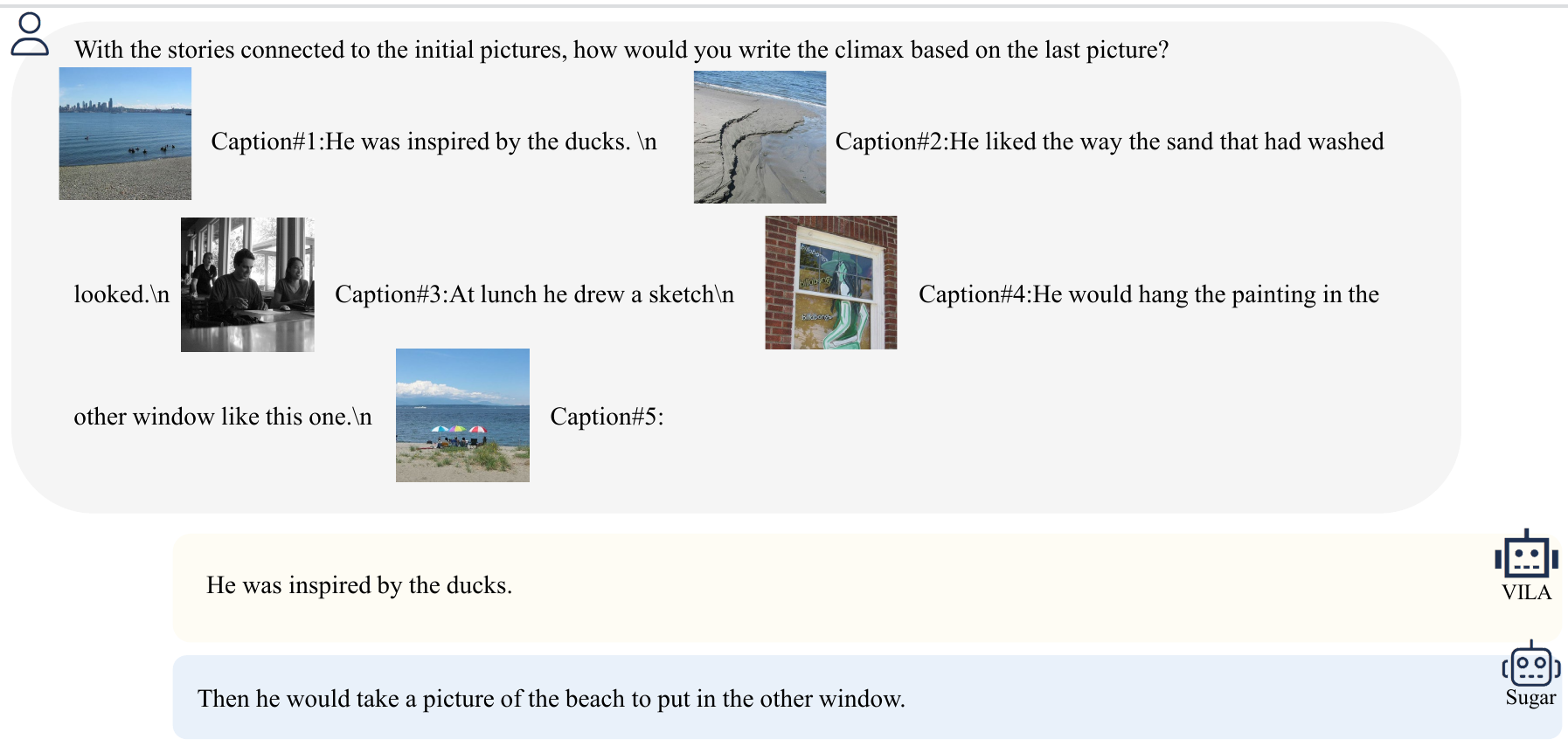}
	\vspace{-1mm}
	\caption{Selected examples for Interleaved Comprehension (continued for Figure~\ref{fig:app_inter_comp_1}).}
	\label{fig:app_inter_comp_2}  
	\vspace{-3mm}
\end{figure}
\newpage
\begin{figure}[h]
	\centering  
	\vspace{-6mm}
	\includegraphics[width=0.99\textwidth]{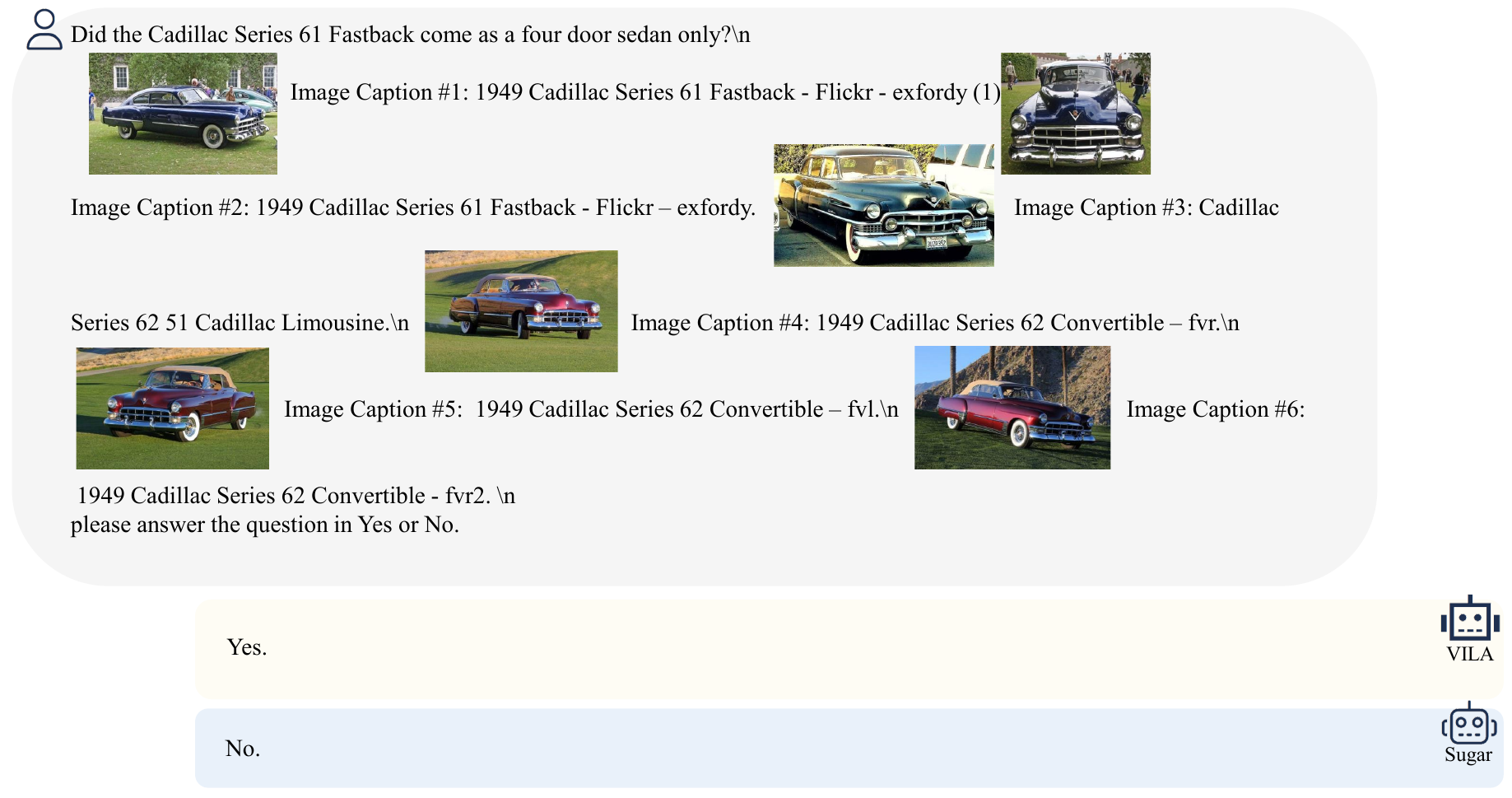}
	\includegraphics[width=0.99\textwidth]{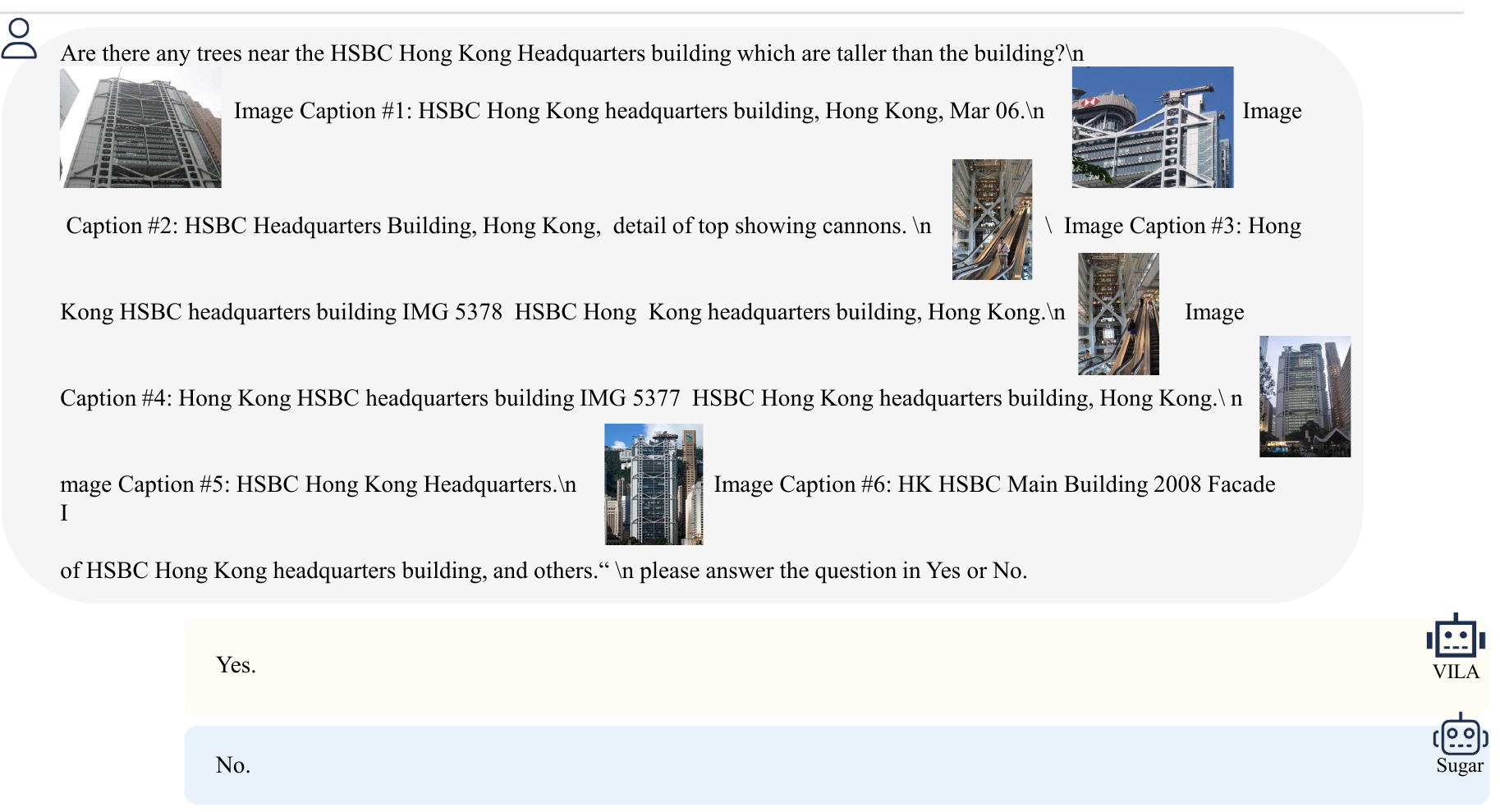}
	\vspace{-1mm}
	\caption{Selected examples for Interleaved Comprehension (continued for Figure~\ref{fig:app_inter_comp_2}).}
	\label{fig:app_inter_comp_3}  
	\vspace{-0.1mm}
\end{figure}
\textbf{Sensitivity with Detailed Semantics}: \shortname{} can address various examples inspired by the Winograd schema~\cite{levesque2012winograd}. These examples consist of multiple sentences that differ only by a single word, leading to different resolutions of ambiguity. \shortname{} can accurately match images and text, demonstrating its sensitivity to even minor changes in input prompts. Figure~\ref{fig:app_wino} showcases some cases that align with the Winograd schema from Winoground.
\begin{figure}[H]
	\centering  
	\vspace{-4mm}
	\includegraphics[width=0.99\textwidth]{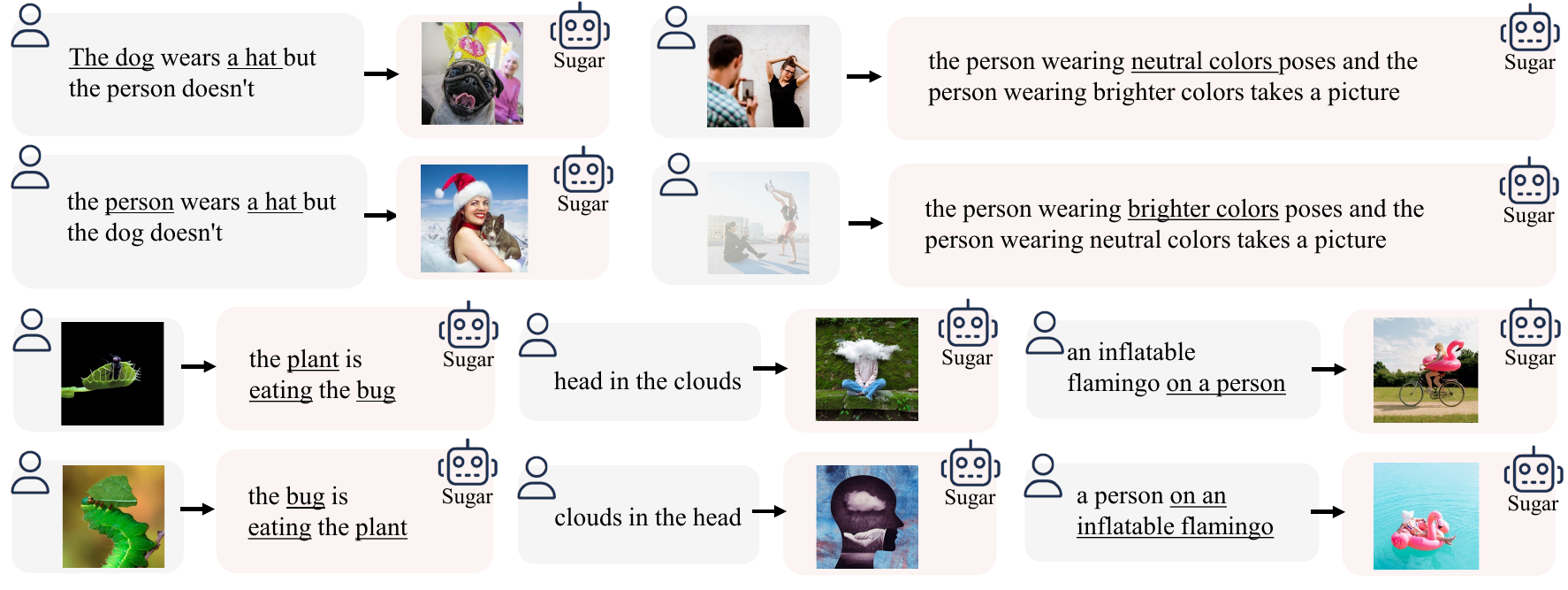}
	\vspace{-1mm}
	\caption{Selected examples from Winoground. \shortname{} is Sensitivity with Detailed Semantics}
	\label{fig:app_wino}  
	\vspace{-3mm}
\end{figure}

\section{Retrieval for Knowledge-based VQA}
In this section, we use FVQA~\cite{wang2017fvqa} and WebQA~\cite{WebQA21}, two knowledge-based VQA datasets, to verify Sugar's effectiveness of combining retrieval and comprehension abilities in a single model, thereby avoiding compatibility issues and suboptimal performance.

Historically, solving FVQA has relied on modeling the knowledge database using a Knowledge Graph~\cite{chen2021zero}. For WebQA, each question is associated with 10-20 knowledge bases, but only one is relevant to the image and caption. FVQA knowledge is textual, whereas WebQA knowledge consists of both text and pictures.

\textbf{Implement Details}. In this experiment, we used CLIP ViT-L/14@336px, and both experiments report the ROUGE-L Score.

For FVQA, answers originate from two sources: directly from the image or from the knowledge base. To minimize interference, we only tested questions requiring the knowledge base. We used the following prompt for FVQA: "Please answer the questions based on the pictures. If the reference information is useful, please use it. Otherwise, please ignore the reference information. Reference information: {retrieved knowledge} <image> {question}." The \textit{baseline} without retrieval means we did not search for knowledge, but directly input the image and question for the model to answer. \textit{+ CLIP image} means using the image to retrieve knowledge, \textit{+ CLIP text} means using the text to retrieve knowledge, and \textit{+ CLIP average} means using the average annotations of both image and text to retrieve knowledge. For our model, \textit{sugar+rag} indicates the average result obtained using both image and text to retrieve knowledge.

For WebQA, each question has 10-20 negative captions and images. Due to context length limitations in LLaVA and VILA, we could not input all the data, necessitating a retrieval model to extract relevant knowledge. Due to the large dataset size, we randomly selected 1000 samples. For WebQA, \textit{+CLIP image} means providing the positive image and using it to retrieve the most relevant text from the knowledge base, which is then used as input for the model to answer the question. Conversely, \textit{+CLIP text} uses the text to retrieve relevant images. For our model, \textit{sugar+rag} indicates the result obtained using the average similarity score of the aforementioned methods.
\begin{table}[th!]
	\setlength{\tabcolsep}{3pt}
	\centering
	\scalebox{0.83}{
\begin{tabular}{lll}
\hline
                              & FVQA          & WebQA         \\ \hline
LLaVA-1.5-7B                  & 5.9           & /             \\
LLaVA-1.5-7B + CLIP image     & 6.8           & 81.8          \\
LLaVA-1.5-7B + CLIP text      & 7.1           & 79.2          \\
LLaVA-1.5-7B + CLIP (average) & 7.9           & /             \\
VILA-7B                       & 6.4           & /             \\
VILA-7B + CLIP image          & 9.0           & 80.0          \\
VILA-7B + CLIP text           & 10.2          & 71.2          \\
VILA-7B + CLIP (average)      & 11.0          & /             \\
\textbf{Sugar}                & 6.5           & /             \\
\textbf{Sugar + rag}          & \textbf{20.7} & \textbf{88.7}  \\\hline
\end{tabular}
 	}
	\vspace{1mm}
	\caption{Comparison between the independent generator + retriever and Sugar on knowledge-based VQA. '/' indicates not applicable.}
	\label{rebuttal:05}
        \vspace{-2mm}
\end{table}

\textbf{Results}. Based on Table~\ref{rebuttal:05}, we can observe that while MLLM can answer a small portion of FVQA questions using its internal knowledge, it still requires the support of a retriever for enhanced accuracy. However, the impact of retrieval strategies on the results is inconsistent. For instance, using text retrieval often outperforms image retrieval in FVQA, whereas in WebQA, image retrieval is more effective. Additionally, there are compatibility issues between retrieval strategies and models. For example, in WebQA, VILA is more sensitive to CLIP's retrieval strategy, with fluctuations 3.4 times greater than those of LLaVA-1.5. Our integrated retriever and generator model, however, does not require an additional retriever and avoids the aforementioned optimization and selection issues.

\newpage
\section*{NeurIPS Paper Checklist}
\begin{enumerate}

\item {\bf Claims}
    \item[] Question: Do the main claims made in the abstract and introduction accurately reflect the paper's contributions and scope?
    \item[] Answer: \answerYes{}
    \item[] Justification: The main claims presented in the abstract and introduction provide an accurate representation of the paper's contributions and scope.
    \item[] Guidelines:
    \begin{itemize}
        \item The answer NA means that the abstract and introduction do not include the claims made in the paper.
        \item The abstract and/or introduction should clearly state the claims made, including the contributions made in the paper and important assumptions and limitations. A No or NA answer to this question will not be perceived well by the reviewers. 
        \item The claims made should match theoretical and experimental results, and reflect how much the results can be expected to generalize to other settings. 
        \item It is fine to include aspirational goals as motivation as long as it is clear that these goals are not attained by the paper. 
    \end{itemize}

\item {\bf Limitations}
    \item[] Question: Does the paper discuss the limitations of the work performed by the authors?
    \item[] Answer: \answerYes{}
    \item[] Justification: We discuss the limitations of the work in Appendix~\ref{limi}.
    \item[] Guidelines:
    \begin{itemize}
        \item The answer NA means that the paper has no limitation while the answer No means that the paper has limitations, but those are not discussed in the paper. 
        \item The authors are encouraged to create a separate "Limitations" section in their paper.
        \item The paper should point out any strong assumptions and how robust the results are to violations of these assumptions (e.g., independence assumptions, noiseless settings, model well-specification, asymptotic approximations only holding locally). The authors should reflect on how these assumptions might be violated in practice and what the implications would be.
        \item The authors should reflect on the scope of the claims made, e.g., if the approach was only tested on a few datasets or with a few runs. In general, empirical results often depend on implicit assumptions, which should be articulated.
        \item The authors should reflect on the factors that influence the performance of the approach. For example, a facial recognition algorithm may perform poorly when image resolution is low or images are taken in low lighting. Or a speech-to-text system might not be used reliably to provide closed captions for online lectures because it fails to handle technical jargon.
        \item The authors should discuss the computational efficiency of the proposed algorithms and how they scale with dataset size.
        \item If applicable, the authors should discuss possible limitations of their approach to address problems of privacy and fairness.
        \item While the authors might fear that complete honesty about limitations might be used by reviewers as grounds for rejection, a worse outcome might be that reviewers discover limitations that aren't acknowledged in the paper. The authors should use their best judgment and recognize that individual actions in favor of transparency play an important role in developing norms that preserve the integrity of the community. Reviewers will be specifically instructed to not penalize honesty concerning limitations.
    \end{itemize}

\item {\bf Theory Assumptions and Proofs}
    \item[] Question: For each theoretical result, does the paper provide the full set of assumptions and a complete (and correct) proof?
    \item[] Answer: \answerYes{}
    \item[] Justification: We give the proof in Appendix~\ref{app_proof}.
    \item[] Guidelines:
    \begin{itemize}
        \item The answer NA means that the paper does not include theoretical results. 
        \item All the theorems, formulas, and proofs in the paper should be numbered and cross-referenced.
        \item All assumptions should be clearly stated or referenced in the statement of any theorems.
        \item The proofs can either appear in the main paper or the supplemental material, but if they appear in the supplemental material, the authors are encouraged to provide a short proof sketch to provide intuition. 
        \item Inversely, any informal proof provided in the core of the paper should be complemented by formal proofs provided in appendix or supplemental material.
        \item Theorems and Lemmas that the proof relies upon should be properly referenced. 
    \end{itemize}

    \item {\bf Experimental Result Reproducibility}
    \item[] Question: Does the paper fully disclose all the information needed to reproduce the main experimental results of the paper to the extent that it affects the main claims and/or conclusions of the paper (regardless of whether the code and data are provided or not)?
    \item[] Answer: \answerYes{} 
    \item[] Justification:We give experimental setup and implementation details in Section~\ref{setup} and Appendix~\ref{exp_detail}.
    \item[] Guidelines:
    \begin{itemize}
        \item The answer NA means that the paper does not include experiments.
        \item If the paper includes experiments, a No answer to this question will not be perceived well by the reviewers: Making the paper reproducible is important, regardless of whether the code and data are provided or not.
        \item If the contribution is a dataset and/or model, the authors should describe the steps taken to make their results reproducible or verifiable. 
        \item Depending on the contribution, reproducibility can be accomplished in various ways. For example, if the contribution is a novel architecture, describing the architecture fully might suffice, or if the contribution is a specific model and empirical evaluation, it may be necessary to either make it possible for others to replicate the model with the same dataset, or provide access to the model. In general. releasing code and data is often one good way to accomplish this, but reproducibility can also be provided via detailed instructions for how to replicate the results, access to a hosted model (e.g., in the case of a large language model), releasing of a model checkpoint, or other means that are appropriate to the research performed.
        \item While NeurIPS does not require releasing code, the conference does require all submissions to provide some reasonable avenue for reproducibility, which may depend on the nature of the contribution. For example
        \begin{enumerate}
            \item If the contribution is primarily a new algorithm, the paper should make it clear how to reproduce that algorithm.
            \item If the contribution is primarily a new model architecture, the paper should describe the architecture clearly and fully.
            \item If the contribution is a new model (e.g., a large language model), then there should either be a way to access this model for reproducing the results or a way to reproduce the model (e.g., with an open-source dataset or instructions for how to construct the dataset).
            \item We recognize that reproducibility may be tricky in some cases, in which case authors are welcome to describe the particular way they provide for reproducibility. In the case of closed-source models, it may be that access to the model is limited in some way (e.g., to registered users), but it should be possible for other researchers to have some path to reproducing or verifying the results.
        \end{enumerate}
    \end{itemize}

\item {\bf Open access to data and code}
    \item[] Question: Does the paper provide open access to the data and code, with sufficient instructions to faithfully reproduce the main experimental results, as described in supplemental material?
    \item[] Answer: \answerNA{} 
    \item[] Justification: The codes will come soon and all the data is public to access.
    \item[] Guidelines: 
    \begin{itemize}
        \item The answer NA means that paper does not include experiments requiring code.
        \item Please see the NeurIPS code and data submission guidelines (\url{https://nips.cc/public/guides/CodeSubmissionPolicy}) for more details.
        \item While we encourage the release of code and data, we understand that this might not be possible, so “No” is an acceptable answer. Papers cannot be rejected simply for not including code, unless this is central to the contribution (e.g., for a new open-source benchmark).
        \item The instructions should contain the exact command and environment needed to run to reproduce the results. See the NeurIPS code and data submission guidelines (\url{https://nips.cc/public/guides/CodeSubmissionPolicy}) for more details.
        \item The authors should provide instructions on data access and preparation, including how to access the raw data, preprocessed data, intermediate data, and generated data, etc.
        \item The authors should provide scripts to reproduce all experimental results for the new proposed method and baselines. If only a subset of experiments are reproducible, they should state which ones are omitted from the script and why.
        \item At submission time, to preserve anonymity, the authors should release anonymized versions (if applicable).
        \item Providing as much information as possible in supplemental material (appended to the paper) is recommended, but including URLs to data and code is permitted.
    \end{itemize}

\item {\bf Experimental Setting/Details}
    \item[] Question: Does the paper specify all the training and test details (e.g., data splits, hyperparameters, how they were chosen, type of optimizer, etc.) necessary to understand the results?
    \item[] Answer: \answerYes{} 
    \item[] Justification: we have provided necessary implementation details of our method in Appendix~\ref{exp_detail}.
    \item[] Guidelines:
    \begin{itemize}
        \item The answer NA means that the paper does not include experiments.
        \item The experimental setting should be presented in the core of the paper to a level of detail that is necessary to appreciate the results and make sense of them.
        \item The full details can be provided either with the code, in appendix, or as supplemental material.
    \end{itemize}

\item {\bf Experiment Statistical Significance}
    \item[] Question: Does the paper report error bars suitably and correctly defined or other appropriate information about the statistical significance of the experiments?
    \item[] Answer: \answerYes{} 
    \item[] Justification: We followed the baseline settings on the evaluation benchmark.
    \item[] Guidelines:
    \begin{itemize}
        \item The answer NA means that the paper does not include experiments.
        \item The authors should answer "Yes" if the results are accompanied by error bars, confidence intervals, or statistical significance tests, at least for the experiments that support the main claims of the paper.
        \item The factors of variability that the error bars are capturing should be clearly stated (for example, train/test split, initialization, random drawing of some parameter, or overall run with given experimental conditions).
        \item The method for calculating the error bars should be explained (closed form formula, call to a library function, bootstrap, etc.)
        \item The assumptions made should be given (e.g., Normally distributed errors).
        \item It should be clear whether the error bar is the standard deviation or the standard error of the mean.
        \item It is OK to report 1-sigma error bars, but one should state it. The authors should preferably report a 2-sigma error bar than state that they have a 96\% CI, if the hypothesis of Normality of errors is not verified.
        \item For asymmetric distributions, the authors should be careful not to show in tables or figures symmetric error bars that would yield results that are out of range (e.g. negative error rates).
        \item If error bars are reported in tables or plots, The authors should explain in the text how they were calculated and reference the corresponding figures or tables in the text.
    \end{itemize}

\item {\bf Experiments Compute Resources}
    \item[] Question: For each experiment, does the paper provide sufficient information on the computer resources (type of compute workers, memory, time of execution) needed to reproduce the experiments?
    \item[] Answer: \answerYes{} 
    \item[] Justification: We give the statements of experiments compute resources in Appendix~\ref{app_train}.
    \item[] Guidelines:
    \begin{itemize}
        \item The answer NA means that the paper does not include experiments.
        \item The paper should indicate the type of compute workers CPU or GPU, internal cluster, or cloud provider, including relevant memory and storage.
        \item The paper should provide the amount of compute required for each of the individual experimental runs as well as estimate the total compute. 
        \item The paper should disclose whether the full research project required more compute than the experiments reported in the paper (e.g., preliminary or failed experiments that didn't make it into the paper). 
    \end{itemize}
    
\item {\bf Code Of Ethics}
    \item[] Question: Does the research conducted in the paper conform, in every respect, with the NeurIPS Code of Ethics \url{https://neurips.cc/public/EthicsGuidelines}?
    \item[] Answer: \answerYes{} 
    \item[] Justification: The research conducted in the paper conforms, in every respect, to the NeurIPS Code of Ethics.
    \item[] Guidelines:
    \begin{itemize}
        \item The answer NA means that the authors have not reviewed the NeurIPS Code of Ethics.
        \item If the authors answer No, they should explain the special circumstances that require a deviation from the Code of Ethics.
        \item The authors should make sure to preserve anonymity (e.g., if there is a special consideration due to laws or regulations in their jurisdiction).
    \end{itemize}

\item {\bf Broader Impacts}
    \item[] Question: Does the paper discuss both potential positive societal impacts and negative societal impacts of the work performed?
    \item[] Answer: \answerYes{} 
    \item[] Justification: We discuss the Broader Impacts in Appendix~\ref{bimpact}.
    \item[] Guidelines:
    \begin{itemize}
        \item The answer NA means that there is no societal impact of the work performed.
        \item If the authors answer NA or No, they should explain why their work has no societal impact or why the paper does not address societal impact.
        \item Examples of negative societal impacts include potential malicious or unintended uses (e.g., disinformation, generating fake profiles, surveillance), fairness considerations (e.g., deployment of technologies that could make decisions that unfairly impact specific groups), privacy considerations, and security considerations.
        \item The conference expects that many papers will be foundational research and not tied to particular applications, let alone deployments. However, if there is a direct path to any negative applications, the authors should point it out. For example, it is legitimate to point out that an improvement in the quality of generative models could be used to generate deepfakes for disinformation. On the other hand, it is not needed to point out that a generic algorithm for optimizing neural networks could enable people to train models that generate Deepfakes faster.
        \item The authors should consider possible harms that could arise when the technology is being used as intended and functioning correctly, harms that could arise when the technology is being used as intended but gives incorrect results, and harms following from (intentional or unintentional) misuse of the technology.
        \item If there are negative societal impacts, the authors could also discuss possible mitigation strategies (e.g., gated release of models, providing defenses in addition to attacks, mechanisms for monitoring misuse, mechanisms to monitor how a system learns from feedback over time, improving the efficiency and accessibility of ML).
    \end{itemize}
    
\item {\bf Safeguards}
    \item[] Question: Does the paper describe safeguards that have been put in place for responsible release of data or models that have a high risk for misuse (e.g., pretrained language models, image generators, or scraped datasets)?
    \item[] Answer: \answerNA{} 
    \item[] Justification: The paper poses no such risks.
    \item[] Guidelines:
    \begin{itemize}
        \item The answer NA means that the paper poses no such risks.
        \item Released models that have a high risk for misuse or dual-use should be released with necessary safeguards to allow for controlled use of the model, for example by requiring that users adhere to usage guidelines or restrictions to access the model or implementing safety filters. 
        \item Datasets that have been scraped from the Internet could pose safety risks. The authors should describe how they avoided releasing unsafe images.
        \item We recognize that providing effective safeguards is challenging, and many papers do not require this, but we encourage authors to take this into account and make a best faith effort.
    \end{itemize}

\item {\bf Licenses for existing assets}
    \item[] Question: Are the creators or original owners of assets (e.g., code, data, models), used in the paper, properly credited and are the license and terms of use explicitly mentioned and properly respected?
    \item[] Answer: \answerYes{} 
    \item[] Justification: We have already cited all the original paper that produced the code package or dataset.
    \item[] Guidelines:
    \begin{itemize}
        \item The answer NA means that the paper does not use existing assets.
        \item The authors should cite the original paper that produced the code package or dataset.
        \item The authors should state which version of the asset is used and, if possible, include a URL.
        \item The name of the license (e.g., CC-BY 4.0) should be included for each asset.
        \item For scraped data from a particular source (e.g., website), the copyright and terms of service of that source should be provided.
        \item If assets are released, the license, copyright information, and terms of use in the package should be provided. For popular datasets, \url{paperswithcode.com/datasets} has curated licenses for some datasets. Their licensing guide can help determine the license of a dataset.
        \item For existing datasets that are re-packaged, both the original license and the license of the derived asset (if it has changed) should be provided.
        \item If this information is not available online, the authors are encouraged to reach out to the asset's creators.
    \end{itemize}

\item {\bf New Assets}
    \item[] Question: Are new assets introduced in the paper well documented and is the documentation provided alongside the assets?
    \item[] Answer: \answerNA{} 
    \item[] Justification: The paper does not release new assets.
    \item[] Guidelines:
    \begin{itemize}
        \item The answer NA means that the paper does not release new assets.
        \item Researchers should communicate the details of the dataset/code/model as part of their submissions via structured templates. This includes details about training, license, limitations, etc. 
        \item The paper should discuss whether and how consent was obtained from people whose asset is used.
        \item At submission time, remember to anonymize your assets (if applicable). You can either create an anonymized URL or include an anonymized zip file.
    \end{itemize}

\item {\bf Crowdsourcing and Research with Human Subjects}
    \item[] Question: For crowdsourcing experiments and research with human subjects, does the paper include the full text of instructions given to participants and screenshots, if applicable, as well as details about compensation (if any)? 
    \item[] Answer: \answerNA{} 
    \item[] Justification: This paper does not involve crowdsourcing nor research with human subjects.
    \item[] Guidelines:
    \begin{itemize}
        \item The answer NA means that the paper does not involve crowdsourcing nor research with human subjects.
        \item Including this information in the supplemental material is fine, but if the main contribution of the paper involves human subjects, then as much detail as possible should be included in the main paper. 
        \item According to the NeurIPS Code of Ethics, workers involved in data collection, curation, or other labor should be paid at least the minimum wage in the country of the data collector. 
    \end{itemize}

\item {\bf Institutional Review Board (IRB) Approvals or Equivalent for Research with Human Subjects}
    \item[] Question: Does the paper describe potential risks incurred by study participants, whether such risks were disclosed to the subjects, and whether Institutional Review Board (IRB) approvals (or an equivalent approval/review based on the requirements of your country or institution) were obtained?
    \item[] Answer: \answerNA{} 
    \item[] Justification: The paper does not involve crowdsourcing nor research with human subjects.
    \item[] Guidelines:
    \begin{itemize}
        \item The answer NA means that the paper does not involve crowdsourcing nor research with human subjects.
        \item Depending on the country in which research is conducted, IRB approval (or equivalent) may be required for any human subjects research. If you obtained IRB approval, you should clearly state this in the paper. 
        \item We recognize that the procedures for this may vary significantly between institutions and locations, and we expect authors to adhere to the NeurIPS Code of Ethics and the guidelines for their institution. 
        \item For initial submissions, do not include any information that would break anonymity (if applicable), such as the institution conducting the review.
    \end{itemize}

\end{enumerate}

\end{document}